\documentclass{article}

\usepackage{microtype}
\usepackage{graphicx}
\usepackage{subcaption}
\usepackage{booktabs} % for professional tables

\usepackage{hyperref}

\usepackage[accepted]{icml2026}

\usepackage{amsmath}
\usepackage{amssymb}
\usepackage{mathtools}
\usepackage{amsthm}
\usepackage{amsfonts}
\usepackage{bm}
\usepackage{tikz}
\usepackage{tikz-3dplot}
\usepackage{pgfmath} % Required for random number generation
\usepackage{import}
\usepackage{siunitx}
\usepackage{tabularx,array}
\usepackage{xcolor}

\usepackage[capitalize,noabbrev]{cleveref}

\theoremstyle{plain}
\newtheorem{theorem}{Theorem}[section]
\newtheorem{proposition}[theorem]{Proposition}
\newtheorem{lemma}[theorem]{Lemma}
\newtheorem{corollary}[theorem]{Corollary}
\theoremstyle{definition}

\theoremstyle{remark}
\newtheorem{remark}[theorem]{Remark}
\newtheorem*{reprop}{Proposition}

\usepackage[textsize=tiny]{todonotes}

\icmltitlerunning{Stochastic Lifting}

\newcommand{\pdim}{n} %% physical dim
\newcommand{\ldim}{d} %% lifting dim

\newcommand{\lbreak}{\\~\\}
\newcommand{\bfxi}{{\bm \xi}}

\newcommand{\bftheta}{\bm{\theta}}
\newcommand{\R}{{\mathbb R}}
\newcommand{\map}{F_{\bftheta}}

\newcommand{\labeldist}{\nu}

\newcommand{\x}{\bm{x}}
\newcommand{\y}{\bm{y}}
\newcommand{\X}{\bm{X}}

\newcommand{\bfx}{\bm{x}}

\newcommand{\Dcal}{\mathcal{D}}

\newcommand{\bfy}{\boldsymbol{y}}
\newcommand{\bfz}{\boldsymbol{z}}
\newcommand{\bfu}{\boldsymbol{u}}
\newcommand{\W}{\boldsymbol{W}}

\newcommand{\bR}{\mathbb{R}}
\newcommand{\rd}{\mathrm{d}}

\begin{document}

\twocolumn[
  \icmltitle{Stochastic Lifting for Generating Trajectories of Stochastic Physical Systems}
  \icmlsetsymbol{equal}{*}

  \begin{icmlauthorlist}
    \icmlauthor{Jules Berman}{yyy}
    \icmlauthor{Tobias Blickhan}{yyy}
    \icmlauthor{Benjamin Peherstorfer}{yyy}

  \end{icmlauthorlist}

  \icmlaffiliation{yyy}{Courant Institute of Mathematical Sciences, New York University, New York, NY 10012, USA}

  \icmlcorrespondingauthor{Jules Berman}{jmb1174@nyu.edu}
  \icmlkeywords{one-step generative modeling, autoregressive generative models, physics-informed machine learning, conditional generative modeling, partial differential equations, neural operators}

  \vskip 0.3in
]
\printAffiliationsAndNotice{}  % no special notice (required even if empty)

\begin{abstract}
  Many stochastic physical systems evolve smoothly over time in the sense that the distribution of states changes regularly across time steps. The transition from current state to the next state can often be modeled as the combination of a smooth map and an explicit source of randomness. Stochastic Lifting exploits this structure by attaching an independent, high-dimensional random label to each state transition in the training data and fitting a transition map from the current state and label to the next state using a standard regression loss. The labels act as auxiliary coordinates that let the model represent multiple plausible next states from similar current states, avoiding collapse to a mean prediction in the finite-sample size regime. At inference, fresh labels are sampled at each time step and the learned map is rolled forward autoregressively, generating diverse trajectories with a single network evaluation per time step. 
\end{abstract}

\section{Overview}
Many problems in computational physics require drawing large numbers of trajectories from stochastic time-dependent systems. A common abstraction is that dynamics induce a transition law $\X_{t + 1} \sim \rho(\cdot | \X_t = \x_t)$ conditional on the previous state $\x_t$. The learning problem that we study is therefore not point prediction but sampling plausible next states given a current state $\x_t$.

A fundamental difficulty is that fitting a map to training data  $(\x_t, \x_{t + 1})$ with standard mean-squared-error regression collapses to the conditional expectation $\mathbb{E}[\X_{t + 1} | \X_t = \x_t]$ and therefore suppresses the intrinsic variability of the stochastic trajectories. At the other extreme, general-purpose generative models such as diffusion- and flow-based approaches can represent distributions, but they typically require multi-step sampling at inference: roughly speaking, generating $T$ time steps with an $s$-step sampler requires $T \times s$ neural-network evaluations \cite{chen2024probabilistic,KOHL2026108641}. Recent work on one-step samplers aims to reduce this cost by distilling multi-step generation into a single evaluation, but such methods are often more difficult to train reliably and have primarily been demonstrated in stationary settings that differ from the trajectory generation setting that we are interested in \cite{song2023consistency,zhou2025inductivemomentmatching,geng2025mean,boffi2025how}. We defer a detailed discussion to the literature review in the next section.

\vspace*{-0.2cm}\paragraph{A smoothness premise specific to time-dependent systems and trajectory data}
Our approach is based on a premise that is natural for physical systems; and more broadly for many time-dependent dynamical systems: for sufficiently small time steps, the conditional transition law $\rho(\cdot | \x_t)$ varies regularly with the state $\x_t$, and next states can be interpreted as small perturbations of the current state.
This regularity suggests that the transition can be (at least approximately) represented by a smooth map that takes the current state together with an explicit source of randomness and outputs a next state. The key point is the smoothness of the map. We therefore use smoothness as a guiding heuristic: when a learned transition map interpolates the training data and is smooth, then evaluating it with fresh randomness is expected to generate plausible next-state samples that are close to the desired conditional law.

\vspace*{-0.15cm}\paragraph{Stochastic Lifting}
Stochastic Lifting augments the  training-data pairs $(\x_t, \x_{t + 1})$ with an independently drawn auxiliary stochastic label $\bfxi_t$. It then trains a map $F: (\x_t, \bfxi_t) \mapsto \x_{t + 1}$ by minimizing a standard empirical regression loss over the labeled pairs. The label acts as an auxiliary coordinate that parametrizes stochasticity: by assigning a distinct random label to each realized transition in the training data, the map $F$ can fit multiple possible next states associated with similar or identical current states, rather than being driven by the regression loss to collapse toward the average.
Moreover, the dimension of the label provides a handle on regularity to some extent: increasing the label dimension improves the separability of labeled training-data pairs and, as we show, can reduce the minimal Lipschitz constant among interpolants of the training data. In this way, stochastic labeling supports learning maps that interpolate the training data and remain smooth, aligning the learned maps with the smoothness premise outlined~above. 

At inference time, we take the current state, draw a fresh label from the label distribution and then evaluate the learned map to obtain a next-state sample. Repeating this procedure with fresh labels yields multiple plausible next-state samples. Furthermore, auto-regressively rolling out the map with a fresh label at each time step leads to a trajectory, requiring only one neural-network evaluation per time step.

\vspace*{-0.2cm}\paragraph{Regime and scope}
An important nuance is that Stochastic Lifting is \emph{not} aiming to yield a meaningful population-limit solution. Indeed, even with the randomly labeled data, the population minimizer collapses to the conditional expectation and ignores the labels. Stochastic Lifting is instead intended for the  finite-sample regime, where the labeled data can be fit closely, often up to interpolation. Accordingly, instead of convergence in the classical population limit, we are interested in how close the samples that are generated with Stochastic Lifting are compared to actual samples. Our theory is intended as qualitative finite-sample guidance for this regime, highlighting the role of Lipschitz regularity and label dimension rather than guaranteeing convergence or giving tight generalization bounds. 

Empirically, we demonstrate state-of-the-art accuracy for generating trajectories of stochastic time-dependent physical systems compared to, e.g., auto-regressive diffusion models and one-step samplers. At the same time, we stress that our method is not designed for unconditional sample generation from noise (e.g., images from noise), where the corresponding map is often globally complex and generally falls outside the regularity regime we rely on. The transition-learning setting, by contrast, provides the structural regularity that makes Stochastic Lifting effective.

\vspace*{-0.2cm}\paragraph{Summary of contributions}

\textbf{(1)} Introduce Stochastic Lifting, which assigns stochastic labels to transitions to learn a next-step generator; sampling fresh labels at inference time enables trajectory generation with a single network evaluation per time step.\vspace*{-0.1cm}

\textbf{(2)} Provide finite-sample Wasserstein bounds showing how interpolation together with Lipschitz regularity can control sampling error, and show that higher-dimensional labels can improve the finite-sample geometry that makes low-Lipschitz interpolation possible. \vspace*{-0.1cm}%Provide finite-sample Wasserstein bounds showing how interpolation together with Lipschitz regularity induced by the dimension of the stochastic label can control the  sampling error.

\textbf{(3)} Demonstrate state-of-the-art trajectory generation for stochastic, time-dependent physical systems.

\begin{figure*}
  \centering
  \begin{subfigure}[b]{0.28\textwidth}
    \centering
    \includegraphics[width=\linewidth]{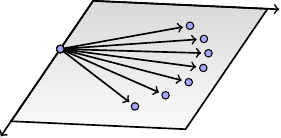}
    \caption{one-to-many issue}
  \end{subfigure}
  \hspace{0.3em}
  \begin{subfigure}[b]{0.28\textwidth}
    \centering
    \includegraphics[width=\linewidth]{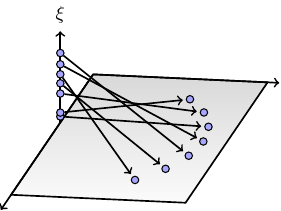}
    \caption{poorly conditioned}
  \end{subfigure}
  \hspace{0.3em}
  \begin{subfigure}[b]{0.28\textwidth}
    \centering
    \includegraphics[width=\linewidth]{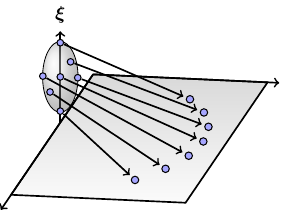}
    \caption{Stochastic Lifting}
  \end{subfigure}
  \caption{\textbf{(a)} For data from a stochastic process, (almost) identical states can have different next states, which prevents learning a transition map from just trajectories. \textbf{(b)} Labeling data points separates them but low-dimensional labels can still lead to poorly conditioned regression problems. \textbf{(c)} Stochastic Lifting uses high-dimensional labels, which enables fitting a smooth regression function.}
  \label{fig:tikzlift}
\end{figure*}

\section{Setup and Problem Formulation}
\subsection{Setup}
Consider a stochastic process $\{ \X_t \}_{t} \subset \mathcal{X} = [0,1]^\pdim$ with initial condition $\X_0 \sim \rho_0$. We denote a state (realization) of $\X_t$ as $\x_t$ and often refer to it as state.
The samples evolve via a conditional distribution
\begin{equation}
  \label{eq:Setup:CondDist}
  \X_{t + 1} | \X_t = \x_t \quad \sim \quad \rho(\,\cdot\, | \x_t)\,.
\end{equation}
To ease exposition, we focus on transitions that condition on the previous state $\X_t = \x_t$ only, but all of the following extends to transitions that condition on a history of more than one previous state.
By the law of total probability, we define the time marginals as $\rho_{t+1} = \int_{\mathcal X} \rho( \cdot | \x_t) \rho_t(\x_t) \, \rd \x_t$ with the assumption that we can directly sample from $\rho_0$. Notice that the time marginals $\rho_t$ can change with time while the conditional distribution $\rho(  \cdot  | \x_t)$ is independent of time and only depends on the previous state $\x_t$.
The coupling between $\x_t, \x_{t+1}$ is described by the density $\pi_t$ over $\mathbb{R}^n \times \mathbb{R}^{n}$ as $\pi_t(\x_t, \x_{t+1}) = \rho( \x_{t+1} | \x_t)\rho_t(\x_t)$.

Our data consists of pairs of states
\begin{equation}
  \label{eq:Dcal}
  \Dcal = \bigcup\nolimits_{i = 1}^M \bigcup\nolimits_{t = 0}^{T-1}
  \left\{ (\x_{t}^{i}, \x_{t+1}^{i})\right\}
\end{equation}
with $(\x_{t}^{i}, \x_{t+1}^{i}) \sim \pi_t$ for all $i = 1, \dots, M$.
In this sense, we assume that we have available paired data. The goal is to learn a map $F$ that mimics the transition \eqref{eq:Setup:CondDist}. Once we have such a map, we can roll it out autoregressively to rapidly predict new trajectories.

\subsection{Challenges of learning one-step transition functions of stochastic processes}\label{sec:Challenges}
\paragraph{Stochastic dynamics lead to one-to-many maps}
There cannot exist a function $f : \mathcal X \to \mathcal X, \x_t \mapsto \x_{t + 1}$ that describes the transition \eqref{eq:Setup:CondDist} of the stochastic process $\{\X_t\}_t$ over more than one time step.
The reason is that, when $\rho(\cdot | \x_t)$ does not collapse to a single point, the transition from $\x_t$ to $\X_{t + 1} | \X_t = \x_t$ can be interpreted as a one-to-many map, see Figure~\ref{fig:tikzlift}a. There are arbitrarily many different realizations $\x_{t + 1}^{j}, j = 1, 2, 3, \dots $ of $\X_{t + 1} | \X_t = \x_t$ at time $t + 1$ and a function $f$ can map from $\x_t$ to only one of them. Even if we consider two different but very close $\bfx_t$ and $\bfx_t^{\prime}$, a similar issue can arise when the corresponding $\bfx_{t + 1}$ and $\bfx_{t + 1}^{\prime}$ are far apart. One can interpret this as a poorly conditioned regression problem, which requires a regression function (and a parametrization that can represent it) that is far from smooth. In particular, the Lipschitz constant of $f$ grows to infeasible values in this setting.

One interpretation of the one-to-many map perspective is that the transition from $\x_t$ to a realization of $\X_{t + 1} \,|\, \X_t = \x_t$ depends on additional information that is not present in $\x_t$.
For example, let us consider a transition \eqref{eq:Setup:CondDist} stemming from a discretized stochastic differential equation $\X_{t + 1} = \X_t + \mathrm d t\, b(\X_t) + \sqrt{\mathrm d t}\,\sigma \W_t$. 
The transition from $\X_t$ to $\X_{t + 1}$ depends on the realization of $\X_t$ but additionally on the realization of the noise $\W_t$, which is typically independent of $\X_t$. Thus, $\x_t$ alone is insufficient to describe $\X_{t + 1} | \X_t = \x_t$.

\paragraph{Fitting a function to stochastic dynamics collapses to the mean dynamics}
Simply fitting a function $f: \mathcal X \to \mathcal X, \x_t \mapsto \x_{t + 1}$ to data \eqref{eq:Dcal} from a stochastic process can capture only mean-like behavior.
In fact, training $f$ on data \eqref{eq:Dcal} with the mean-squared error loss collapses to the conditional expectation $\x_t \mapsto  \mathbb{E}[\X_{t + 1} | \X_t = \x_t]$ in the limit $M \to \infty$.

If the stochasticity in the data \eqref{eq:Dcal} is due to noise and perturbations, then often the goal is recovering the mean behavior, in which case fitting time integrators with the mean-squared loss is reasonable; as in operator learning \cite{li2020fourier,lu2021learning,kovachki2023neural}.
However, our goal is generating new sample trajectories where the randomness is not only injected at the initial condition sampled from $\rho_0$ but also during the roll out, as described by the transition \eqref{eq:Setup:CondDist} and reflected in the data \eqref{eq:Dcal} (recall the example with the stochastic differential equation in the previous paragraph).
Thus, capturing mean-like behavior only by applying operator learning and other deterministic transition modeling methods and relying on input randomness alone is insufficient. In particular, we will also consider stochastic processes that always start with the same initial condition $\rho_0 = \delta_{\x_0}$ (see wave and multi-phase flow in Section~\ref{sec:NumericalExperiments}).

\subsection{Literature review}
\emph{Multi- and single-step generative models} There is a range of multi-step methods for generative modeling, which build on diffusion- and flow-based modeling \cite{sohl-dickstein15,HOjon_ddpm,hyvarinen05a,song2019sliced,song2020generativealg,song2021scorebased,albergo2023building,albergo2023stochastic,lipman2022flow, liu2022flowstraightfastlearning}. In contrast, the focus of our work lies on a one-step method, for which a single neural-network function evaluation per next state is sufficient. Early one-step methods are difficult to train \cite{goodfellow2014generative, arjovsky2017wasserstein} or easily collapse \cite{kingma2013auto}. There are distillation approaches for constructing one-step models out of multi-step models but these require having readily available a multi-step model  \cite{yin2024onestep,10.1007/978-3-031-73016-0_6,ZhouZWYH24, song2023consistency,geng2024consistencymodelseasy,frans2024one}. More recently, there are methods that aim to circumvent having to train a multi-step model first \cite{song2023consistency,zhou2025inductivemomentmatching,geng2025mean,boffi2025how} but they focus on static generation tasks such as noise to image and do not explicitly leverage the regularity induced by conditioning on previous states in time-dependent problems that we rely on for generating trajectory data.

Note that neural processes and latent state-space models introduce latent variables to model conditional or sequential uncertainty and infer the latents from data \cite{10.5555/3298483.3298543,pmlr-v80-garnelo18a}. In contrast, Stochastic Lifting does not infer latents or hidden states but assigns independent random labels to transition pairs of states.

\emph{Autoregressive generative models for trajectory data generation} There is a range of works on autoregressive diffusion models (ARDMs) \cite{hoogeboom2021autoregressive,albergo2023stochastic,Price2025} and also flow-based models \cite{davtyan2023efficientRIVER, chen2024probabilistic} that can generate trajectory data; however, these require multiple steps per next state, which can be expensive. There are also time-space approaches \cite{ho2022video} with the drawback of having to handle a large time-space tensor. We also mention marginal trajectory matching \cite{neklyudov2023action, berman2024parametric,blickhan2025dicediscreteinversecontinuity} which also performs only a single neural-network function evaluation per time step but the corresponding models can be challenging to scale to high dimensions. Another line of work imposes constraints on the diffusion trajectories to better align them \cite{ruhling2023dyffusion,ruhe2024rolling}; these again require multiple steps per next state.

\emph{Lifting} A key aspect of our approach is lifting data into higher dimensions, which is prevalent in machine learning \cite{Cortes1995,NIPS2007_013a006f} but also in computational science in general \cite{mccormick1976computability,5991229,kramer2019nonlinear,qian2020lift,doi:10.1080/03036758.2020.1863237}. We mention neural ordinary differential equations \cite{dupont2019augmented, kidger2022OnNeuralODEs} that augment the state space to extend the expressivity of neural ordinary differential equations. With lifting we pursue an analogous goal of making a problem better behaved, namely smoother.

\section{Stochastic Lifting}
\subsection{Labeling data points to avoid one-to-many issue}
We randomly draw labels $\bfxi_t^{i} \in \mathbb{R}^{\ldim}$ from a distribution $\nu$ for all $\x_t^{i}$ in the data set $\mathcal{D}$, which leads to the augmented data set
\begin{equation}
  \label{eq:LabeledD}
  \mathcal{D}_{\xi} = \bigcup\nolimits_{i = 1}^M \bigcup\nolimits_{t = 0}^{T-1} \left\{ \left(\x_{t}^{i}, \x_{t+1}^{i}, \bfxi_t^{i} \right)\right\}\,.
\end{equation}
In all of the following, the label distribution $\nu$ will be $\mathcal N(0, \boldsymbol{I}_\ldim)$, the standard normal distribution of dimension $\ldim$, which implies that the labels are unique almost surely. We assume that $\labeldist$ is independent of $t$, but we still denote $\bfxi_t \sim \labeldist$ to emphasize that, for a given trajectory $\x^i_0, \ldots, \x^i_{T-1}$, the label of the state at time $t$ need not be the same as the label of the state at time $t+1$.

Uniquely labeling the data points overcomes the one-to-many issue: Even if there are $\x_t^{i} = \x_t^{j}$ for $i \neq j$, the corresponding labeled points $(\x_t^{i}, \bfxi_t^{i})$ and $(\x_t^{j}, \bfxi_t^{j})$ are unique; see Figure~\ref{fig:tikzlift}b.
Thus, circling back to the challenges described in Section~\ref{sec:Challenges}, because of the unique label for each data point, there now exists a function $F: \mathcal{X} \times {\bR^d} \to \mathcal{X}$ that can interpolate (memorize) the data in the sense\vspace*{-0.14cm}
\begin{align}
  \label{eq:Labeling:FInterpolates}
  F(\x_t^{i}, \bfxi_t^{i}) = \x_{t + 1}^{i}\,,\vspace*{-0.15cm}
\end{align}
for $i = 1, \dots, M$ and $t = 0, \dots, T - 1$.
Recalling the example from Section~\ref{sec:Challenges} about the stochastic differential equation, we can interpret $\bfxi_t^{i}$ as a surrogate of the realization of noise $\W_t$ that enters the dynamics.

\subsection{Wasserstein-2 bound for smooth interpolator}

We now show that if the map $F$ is smooth (regular) in terms of its Lipschitz constant and interpolates the data as in \eqref{eq:Labeling:FInterpolates}, then samples generated with $F$ are close to $\rho_{t + 1}$ in the Wasserstein-2 metric.
Note that smooth refers to Lipschitz regularity of the discrete-time transition map $F$ at a fixed time step on the lifted input space, not to differentiability of sample paths of a continuous-time stochastic process that is possibly underlying the transition law of interest.

\begin{proposition}
  \label{prop:Main} Let $F$ interpolate the training data $D_{\xi}$ as in \eqref{eq:Labeling:FInterpolates}. At any time $t$, consider $\tilde{M}$ test samples $\mathcal{D}_{\xi}^{\text{test}} = \{(\tilde{\bfx}_t^i, \tilde{\bfx}_{t + 1}^i, \tilde{\bfxi}_t^i)\}_{i=1}^{\tilde{M}}$ sampled from $\pi_t \otimes \nu$. Evaluate now the map $F$ to obtain the generated samples $\hat{\bfx}_{t+1}^{i} = F(\tilde{\bfx}_t^i, \tilde{\bfxi}_t^i)$ for $i = 1, \dots, \tilde{M}$. Then,
  \begin{multline}
    \label{eq:MainBound}
    \mathbb E_{\mathcal{D}_{\xi}, \mathcal{D}_{\xi}^{\text{test}}} \left[ W_2( \hat{\rho}_{t + 1}, \tilde \rho_{t+1} )^2 \right] \\ \leq C ( 1 + L_F^2 ) \, {\min(M, \tilde M)}^{-2/\alpha} \,,
  \end{multline}
  where $\hat{\rho}_{t + 1}$ and $\tilde{\rho}_{t + 1}$ are the empirical measures corresponding to the generated samples $\{\hat{\bfx}_{t + 1}^i\}_{i = 1}^{\tilde{M}}$ and the test samples $\{\tilde{\bfx}_{t + 1}^i\}_{i = 1}^{\tilde{M}}$. The constant $L_F$ is a deterministic uniform Lipschitz bound on $F$ (see Appendix~\ref{appdx:ProofInterSmoothGen}), $\alpha > 0$ controls the empirical Wasserstein convergence rates of the paired input data $(\bfx_t, \bfxi_t)$ and of the next-state data $\bfx_{t+1}$, and $C > 0$ is an explicit constant depending only on $\alpha$ and the corresponding empirical Wasserstein constants.
\end{proposition}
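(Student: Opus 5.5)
The plan is to pass through the training data with two triangle inequalities in $W_2$ and then control each resulting term with an empirical-convergence rate. Write $\bm z = (\bfx_t,\bfxi_t)$ for the lifted input. Let $\mu_M$ be the empirical measure of the $M$ training inputs $\bm z^i = (\bfx_t^i,\bfxi_t^i)$ and $\tilde\mu_{\tilde M}$ that of the test inputs $\tilde{\bm z}^i$. Let $\rho^{M}_{t+1}$ be the empirical measure of the training next states $\bfx_{t+1}^i$. The two reductions I will use are
\begin{equation*}
  \hat\rho_{t+1} = F_\# \tilde\mu_{\tilde M}, \qquad F_\#\mu_M = \rho^M_{t+1},
\end{equation*}
where the second identity is exactly the interpolation property \eqref{eq:Labeling:FInterpolates}.

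By the triangle inequality and $(a+b)^2 \le 2a^2+2b^2$,
\begin{equation*}
  W_2(\hat\rho_{t+1},\tilde\rho_{t+1})^2 \le 2\, W_2(F_\#\tilde\mu_{\tilde M}, F_\#\mu_M)^2 + 2\, W_2(\rho^M_{t+1},\tilde\rho_{t+1})^2 .
\end{equation*}
For the first term I use the push-forward Lipschitz property $W_2(F_\#\mu,F_\#\mu')\le L_F\, W_2(\mu,\mu')$. This follows by pushing an optimal coupling through $F\times F$, and it is where $L_F$ must be deterministic and uniform, so that it can be pulled out of the expectation over $\mathcal D_\xi$. I then insert the population law $\pi_t^{(1)}\otimes\nu$ of $\bm z$, where $\pi_t^{(1)}=\rho_t$ is the first marginal, giving
\begin{equation*}
  W_2(\tilde\mu_{\tilde M},\mu_M)^2 \le 2\,W_2(\tilde\mu_{\tilde M},\rho_t\otimes\nu)^2 + 2\,W_2(\mu_M,\rho_t\otimes\nu)^2 .
\end{equation*}
For the second term I likewise insert $\rho_{t+1}$, splitting $W_2(\rho^M_{t+1},\tilde\rho_{t+1})^2$ into the training and test empirical errors of the next state.

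Taking expectations, I invoke the assumed empirical Wasserstein rates. These say that $\mathbb E\, W_2(\mu_N,\mu)^2 \le C_z N^{-2/\alpha}$ for i.i.d.\ samples of the lifted input, and similarly with a constant $C_x$ for the next state. Rates of this Fournier--Guillin type require finite higher moments, which holds because $\mathcal X$ is compact and $\nu$ is Gaussian. Since $M^{-2/\alpha}$ and $\tilde M^{-2/\alpha}$ are both at most $\min(M,\tilde M)^{-2/\alpha}$, collecting terms gives
\begin{equation*}
  \mathbb E\, W_2(\hat\rho_{t+1},\tilde\rho_{t+1})^2 \le \bigl(8 L_F^2 C_z + 8 C_x\bigr)\min(M,\tilde M)^{-2/\alpha},
\end{equation*}
which is bounded by $C(1+L_F^2)\min(M,\tilde M)^{-2/\alpha}$ with $C = 8\max(C_z,C_x)$. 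If the two rates come with different exponents, $\alpha$ is taken as the worse one.

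The main obstacle is a subtlety rather than a computation. $F$ is fitted to $\mathcal D_\xi$, so it is data-dependent, and the training points are not independent across $t$. Two things handle this. First, $L_F$ is a deterministic uniform bound, and the push-forward step holds pointwise for every realization, so the dependence of $F$ on the data never enters the expectation. Second, only pairs at the fixed time $t$ are used, and these are i.i.d.\ from $\pi_t$ across $i$. I would also check that $\rho_{t+1}$ is indeed the marginal of the second coordinate of $\pi_t$. A remaining technical caveat is that the $n+d$ dimensional lifted space governs the exponent $\alpha$ through $C_z$, and I absorb that dependence into the stated constants.
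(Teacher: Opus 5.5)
Your proposal is correct and follows the paper's proof essentially step for step. It uses the same triangle inequality through the interpolation identity $F_\sharp\hat\eta_t=\hat\rho_{t+1}^{\mathrm{train}}$, the push-forward Lipschitz lemma applied realization-wise with the deterministic bound $L_F$, the insertion of the population laws $\eta_t=\rho_t\otimes\nu$ and $\rho_{t+1}$ to invoke the assumed empirical rates, and it arrives at the same constants $C=8\max(C_{\mathrm{in}},C_{\mathrm{out}})$ and $\alpha=\max(\alpha_{\mathrm{in}},\alpha_{\mathrm{out}})$.
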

A proof can be found in Appendix~\ref{appdx:ProofInterSmoothGen}. The bound \eqref{eq:MainBound} critically depends on the smoothness of $F$ in terms of the Lipschitz constant $L_F$. While simply interpolating independently drawn samples from two distributions can lead to regression functions with high Lipschitz constants in low dimensions (see Figure~\ref{fig:tikzlift}b), we will discuss in the following that  labels of high dimension can help to reduce $L_F$.

Furthermore, strong conditioning induced by the previous state can be helpful in regimes where the finite-step transition can be written as the current state plus a regular update. More precisely, at a fixed time discretization  we may have $F(\x, \bfxi)=\x + \Delta_{\mathrm dt}(\x,\bfxi)$ with $\Delta_{\rd t}$ Lipschitz on the relevant data domain. In drift-dominated settings this update may scale like $\rd t$; for Brownian or diffusion-dominated increments it may instead scale like $\sqrt{\rd t}$. The results here do not require pathwise differentiability of continuous-time trajectories and do not analyze the limit $\rd t \to 0$. The following corollary isolates the special case $\Delta_{\rd t}=\rd t R$, where the time-step scaling enters in front of the Lipschitz constant and helps alleviate non-smoothness.

\begin{corollary}
  \label{corr:DTPlusL}
  If $F(\x, \bfxi) = \x + \rd t R(\x, \bfxi)$, then \eqref{eq:MainBound} can be written as
  \begin{multline}
    \mathbb E_{\mathcal{D}_{\xi}, \mathcal{D}_{\xi}^{\text{test}}} \left[ W_2( \hat{\rho}_{t + 1}, \tilde \rho_{t+1} )^2 \right] \\ \leq C ( 1 +  \rd t^2 L_R^2 ) \, {\min(M, \tilde M)}^{-2/\alpha}\,,
  \end{multline}
  where $C$ and $\alpha$ are as in Proposition \ref{prop:Main} and $L_R$ is the Lipschitz constant of $R$.
\end{corollary}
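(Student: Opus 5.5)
The corollary follows from Proposition~\ref{prop:Main} once we bound $L_F$ for the specific form $F(\x,\bfxi)=\x+\rd t\,R(\x,\bfxi)$. The plan is to bound $L_F$ by $1+\rd t\,L_R$, substitute this into \eqref{eq:MainBound}, and absorb numerical factors into $C$.

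\textbf{Step 1: Lipschitz bound on $F$.} For any two lifted inputs $(\x,\bfxi)$ and $(\x',\bfxi')$, the triangle inequality gives
\begin{equation*}
\|F(\x,\bfxi)-F(\x',\bfxi')\| \le \|\x-\x'\| + \rd t\,\|R(\x,\bfxi)-R(\x',\bfxi')\| \le (1+\rd t\,L_R)\,\|(\x,\bfxi)-(\x',\bfxi')\|.
\end{equation*}
Here we use that $\|\x-\x'\|$ is bounded by the norm of the joint difference on the product space. So $L_F\le 1+\rd t\,L_R$. If $L_R$ is the deterministic uniform bound in the sense of Appendix~\ref{appdx:ProofInterSmoothGen}, then so is $1+\rd t L_R$. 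This keeps the expectations in the proposition well defined. The interpolation hypothesis \eqref{eq:Labeling:FInterpolates} is assumed for $F$ itself, so it carries over unchanged.

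\textbf{Step 2: substitution.} Apply $(a+b)^2\le 2a^2+2b^2$ to get $L_F^2\le 2+2\,\rd t^2L_R^2$. Hence
\begin{equation*}
1+L_F^2 \le 3+2\,\rd t^2L_R^2 \le 3\,(1+\rd t^2L_R^2).
\end{equation*}
Inserting this into \eqref{eq:MainBound} yields the claimed bound with constant $3C$. The factor $3$ is absorbed into $C$, which still depends only on $\alpha$ and the empirical Wasserstein constants.

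\textbf{Main obstacle.} The only subtle point is the constant. The corollary says $C$ is ``as in Proposition~\ref{prop:Main}'', but the argument above only gives it up to a universal factor of $3$. There are two ways to handle this:
\begin{itemize}
\item[(i)] Read ``as in'' to mean up to an absolute constant, since $C$ is explicit and the factor is harmless.
\item[(ii)] Revisit the proof in Appendix~\ref{appdx:ProofInterSmoothGen}. The $L_F$ dependence likely enters through a term of the form $L_F^2\,W_2(\text{input empirical measures})^2$. There, a sharper decomposition may remove the extra constant: split $\hat\x_{t+1}-\x_{t+1}$ into a state-difference part and a $\rd t$-scaled $R$-difference part, and bound each part by the corresponding empirical rate.
\end{itemize}
I would present route (i), with (ii) as a remark.
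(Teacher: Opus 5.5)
Your proposal is correct and is essentially the paper's argument. The paper re-enters term $(i)$ of the proof of Proposition~\ref{prop:Main} and bounds $|F(\tilde{\x},\tilde{\bfxi})-F(\hat{\x},\hat{\bfxi})|^2\le 2(1+\rd t^2L_R^2)\,|(\tilde{\x},\tilde{\bfxi})-(\hat{\x},\hat{\bfxi})|^2$ under the optimal coupling. This is exactly your estimate $L_F^2\le 2+2\rd t^2L_R^2$ written pointwise, so invoking the proposition as a black box with $L_F\le 1+\rd t\,L_R$ is just a cleaner packaging of the same argument.

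The paper's route, which is essentially your option (ii), also ends at $C(3+2\rd t^2L_R^2)$ with $C=8\max(C_{\mathrm{in}},C_{\mathrm{out}})$, and the paper silently absorbs the factor $3$. So your reading (i) is exactly what the paper does, and option (ii) gains nothing over it.
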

Similar bounds can be derived when assuming more structure on $F$; see Appendix \ref{appdx:ProofInterSmoothGen}.

For SDE-driven data, our statements apply to the one-step map at the fixed time step used in the data, not to the continuous-time Brownian path or to the limit $\rd t \to 0$. For example, a discretized transition may have the form $F(\x,\bfxi)=\x+\rd t\,b(\x)+\sqrt{\rd t}\,\sigma\bfxi$ (see the SDE example in \Cref{sec:Challenges}). This finite-step map is within our scope when it is Lipschitz on the data domain, but the relevant Lipschitz constants can depend on $\rd t$. In particular, the stochastic term scales like $\sqrt{\rd t}$ rather than $\rd t$, so Brownian-dominated dynamics do not benefit from the same short-step scaling as drift-dominated dynamics. Stochastic Lifting is therefore expected to work best when $\x_t$ and $\x_{t+1}$ remain strongly coupled; if the conditional law is nearly independent of $\x_t$, or if pure Brownian noise dominates at the observed time step, then the method falls outside the favorable regime. This coupling requirement is tested directly by the shuffling experiment in Figure~\ref{fig:sdestudy} and the direct-initial-to-final experiment in Figure~\ref{fig:label_dim_direct}. 

\subsection{Lifting for smoothness}
The key in the previous results was the smoothness of $F$ given by the Lipschitz constant, which we can control to some extent by using high-dimensional labels.
The minimal Lipschitz constant of a function $F$ that interpolates the data as in \eqref{eq:Labeling:FInterpolates} is
\begin{equation}
  \label{eq:LipschitzXi}
  L(\mathcal{D}_{\xi})
  =\max_{(i, t) \neq (j, s)} \frac{\| \x^i_{t+1} -\x^j_{s+1} \|_2}{ \sqrt{\| \x_t^i- \x_s^j \|_2^2 +\| \bfxi_t^i- \bfxi_s^j\|_2^2 }}\,,
\end{equation}
for $i,j = 1, \dots, M$ and $t, s = 0, \dots, T - 1$.
Note that a function interpolating the data \eqref{eq:Labeling:FInterpolates} with Lipschitz constant \eqref{eq:LipschitzXi} on the data can be extended to $\mathcal{X} \times {\bR^d}$ without increasing the Lipschitz constant \cite{Federer1996}[Theorem 2.10.43]. In practice, we rely on standard deep learning regularization techniques (weight decay, normalization layers) to learn a regular interpolant of the data.

\paragraph{High-dimensional labels decrease the minimal Lipschitz constant}
We draw our labels from a standard normal of dimension $d$.
For the sake of the theoretical argument we consider normalized labels with unit norm $\|\bfxi_t^i\|_2 = \|\bfxi_s^i\|_2= 1$, which means that $ \frac 1 2 \|\bfxi_t^i - \bfxi_s^j\|_2^2 = 1 - \bfxi_t^i \cdot \bfxi_s^j$. That is, the distance between labels is controlled by their  inner product. Because the normalized labels are uniformly distributed on the unit sphere $\mathbb{S}^{d - 1}$, $\bfxi_t^i \cdot \bfxi_s^j$ is of order $1/ \sqrt d$ with high probability \cite{Vershynin2018}[Section 3.3.3 and Theorem 3.3.9]. Hence increasing $d$ widens the separation between labels and reduces the minimal Lipschitz constant of any interpolant as the following proposition shows (see also Figure~\ref{fig:tikzlift}c).

\begin{proposition}\label{prop:LipschitzC}
  For normalized data and normalized labels, if $M \geq 2$ and $d \geq \max\{c_{\delta}^2\ln((T + 1)M), 2\}$, then
  \[
    L(\mathcal{D}_{\xi}) \leq \frac{\sqrt{n}}{\sqrt{2}}\left(1 + c_{\delta}\sqrt{\frac{\ln((T + 1)M)}{d}}\right)\,,
  \]
  holds with probability at least $1 - \delta$, $\delta \in (0, 1)$, and with constant $c_{\delta} > 0$  independent of $T, M, d$.
\end{proposition}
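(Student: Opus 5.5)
Since the states live in $\mathcal X=[0,1]^\pdim$, I read ``normalized data'' as giving a uniform bound on the numerator of \eqref{eq:LipschitzXi}: $\|\x^i_{t+1}-\x^j_{s+1}\|_2\le\sqrt{\pdim}$ for every pair. The state term in the denominator is nonnegative, so I simply drop it:
\[
L(\mathcal D_\xi)\le \frac{\sqrt{\pdim}}{\min_{(i,t)\neq(j,s)}\|\bfxi_t^i-\bfxi_s^j\|_2}.
\]
The whole task is then a lower bound on the minimal pairwise label distance. For unit-norm labels, $\|\bfxi_t^i-\bfxi_s^j\|_2^2=2(1-\bfxi_t^i\cdot\bfxi_s^j)$, so it suffices to show that all pairwise inner products satisfy $\bfxi_t^i\cdot\bfxi_s^j\le \epsilon$ with probability at least $1-\delta$. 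On that event,
\[
L(\mathcal D_\xi)\le \frac{\sqrt{\pdim}}{\sqrt2}\,(1-\epsilon)^{-1/2}.
\]

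\textbf{Step 1 (concentration and union bound).} For two independent labels uniform on $\mathbb S^{d-1}$, condition on one of them. The inner product is then a $1$-Lipschitz function of a uniform point on the sphere, so concentration on the sphere (\cite{Vershynin2018}, Theorem 3.3.9 / Section 3.3.3) gives
\[
\mathbb P\bigl(\bfxi\cdot\bfxi'>\epsilon\bigr)\le 2\exp(-c\,d\,\epsilon^2)
\]
for an absolute constant $c$. The number of labels is $MT\le (T+1)M=:N$, so there are at most $N^2$ pairs. A union bound gives failure probability at most $2N^2\exp(-c\,d\,\epsilon^2)$. Setting this equal to $\delta$ and solving for $\epsilon$ gives
\[
\epsilon=\sqrt{\frac{\ln(2N^2/\delta)}{c\,d}}.
\]
Because $N\ge M\ge2$, the quantity $\ln N$ is bounded below by $\ln 2$. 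Hence $\ln(2N^2/\delta)\le c'_\delta\ln N$, with $c'_\delta$ depending only on $\delta$. This yields $\epsilon\le c_1\sqrt{\ln N/d}$, where $c_1$ depends only on $\delta$.

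\textbf{Step 2 (elementary inequality).} I need $(1-\epsilon)^{-1/2}\le 1+\epsilon$, which I will check holds for $\epsilon\le 1/2$. Squaring, the claim is $1\le(1-\epsilon)(1+\epsilon)^2=1+\epsilon-\epsilon^2-\epsilon^3$, that is, $\epsilon(1-\epsilon-\epsilon^2)\ge0$. This holds whenever $\epsilon\le(\sqrt5-1)/2$, so in particular for $\epsilon\le1/2$. I will choose $c_\delta\ge 2c_1$. The hypothesis $d\ge c_\delta^2\ln N$ then gives $\epsilon\le c_1/c_\delta\le 1/2$, and the bound becomes
\[
L(\mathcal D_\xi)\le\frac{\sqrt{\pdim}}{\sqrt2}\Bigl(1+c_\delta\sqrt{\ln N/d}\Bigr),
\]
which is the statement. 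The role of $d\ge2$ is only to make the spherical concentration statement nondegenerate; it may be absorbed into the constants.

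\textbf{Main obstacle.} The delicate step is the constant bookkeeping in Step 1. The $\delta$-dependence must be folded into the single constant $c_\delta$, and that same constant must guarantee $\epsilon\le1/2$ under the stated dimension condition. The lower bound $M\ge2$ is precisely what lets $\ln(2/\delta)$ be absorbed into a multiple of $\ln N$. A secondary point is to state the normalization assumption explicitly, so that the numerator bound $\sqrt{\pdim}$ is justified by data in the unit cube.
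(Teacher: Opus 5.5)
Your proposal is correct and matches the paper's proof step for step. You bound the numerator by $\sqrt{n}$, drop the state term, and lower-bound the minimal label distance via concentration of $\bfxi\cdot\bfxi'$ on the sphere (conditioning on one label) plus a union bound. You then absorb $\ln(2/\delta)$ into a multiple of $\ln((T+1)M)$ using $(T+1)M\ge 2$, and finish with an elementary inequality.

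The only differences are cosmetic. You use a one-sided tail and the inequality $(1-\epsilon)^{-1/2}\le 1+\epsilon$ for $\epsilon\le 1/2$, where the paper uses a two-sided tail and $1-\lambda/2\ge(1+\lambda)^{-2}$ for $\lambda\le 1$. Both yield the stated bound, and yours has a factor of two to spare in the correction term.
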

See  Appendix~\ref{sec:Proofs:Lipschitz} for a proof.
This result is deliberately stated for unit-scale labels: the mechanism is near-orthogonal separation at fixed label norm, not an artificial reduction of the ratio by inflating the label diameter. 
Thus, we have some control over the minimal Lipschitz constant that any interpolant must have via the label dimension $d$. Note that the regularity of $F$ is further improved by the closeness between $\x_t$ and $\x_{t+1}$ (Corollary~\ref{corr:DTPlusL}). We do not normalize our labels in practice because we expect the components of the label $\bfxi_t^i$ to be of order one, which is true when $\bfxi_t^i \sim \mathcal N(0, \boldsymbol{I}_d)$. The argument that the minimal $L(\mathcal D_\xi)$ can be improved via a high dimension $d$ remains unchanged.

\paragraph{Orthogonal labels avoid inducing structure}
Traditionally, one would aim to arrange (couple) labels $\bfxi_t^i$ and targets (next state) $\x_{t+1}^i$. We do not rearrange because of costs, and thus we need to ensure that no spurious structure is induced between labels and next states. Choosing the labels close to orthogonal achieves this.

\begin{figure*}[t]
  \centering
  \begin{tabular}{lll}
    \import{plots/sde}{label_w2_shuffle.pgf}       &
    \import{plots/sde}{label_samples_loss_log.pgf} &
    \import{plots/sde}{label_samples_g_norm_512.pgf}
  \end{tabular}
  \caption{Shuffling training pairs $(\x_t, \x_{t + 1})$ breaks the conditional transition structure and  degrades Stochastic Lifting's performance, in agreement with the smoothness premise outlined in the Introduction (\textbf{left}). Increasing the label dimension helps to push the training into the interpolation regime (\textbf{middle}) and induces smoother dependence on the label  (\textbf{right}), consistent with our discussion.}
  \label{fig:sdestudy}
\end{figure*}

\subsection{Fitting maps to stochastically lifted data and one-step inference}\label{sec:Learning:FittingRegLoss}
\paragraph{Training on lifted data and one-step inference}

We now formulate a regression problem using the labeled data $\mathcal{D}_{\xi}$. Let us parametrize $F$ as a neural-network function $F_{\bftheta}$ with weights $\bftheta \in \mathbb{R}^p$.

We then propose to fit $F_{\bftheta}$ to the labeled data $\mathcal{D}_{\xi}$ using a regression-based loss function,
\begin{equation}
  \label{eq:MSELoss}
  \mathcal{L}(\theta, \mathcal{D}_{\xi})=\frac{1}{M T}\sum\limits_{i = 1}^M\sum\limits_{t = 0}^{T-1} \|\map(\x_t^i, \bfxi_t^i) - \x_{t + 1}^i\|_2^2\,.
\end{equation}

Other regression loss functions can be used. In the physics based experiments we use a L2 error. In the experiments involving natural video we use an LPIPS loss \cite{zhang2018unreasonable}; see Appendix~\ref{appx:training}.

To generate a new state at time $t + 1$ for a given $\x_t$, we draw a new label $\bfxi_t \sim \nu$ and evaluate the $F_{\bftheta}(\x_t, \bfxi_t) = \bfx_{t + 1}$ to obtain $\bfx_{t + 1}$. This is one-step inference because only a single function evaluation is necessary to generate a new sample.

\paragraph{Lifting avoids collapse onto conditional expectation (mean behavior) of empirical-risk minimizer}
Critically, when taking the infinite data limit $M \to \infty$, then the minimizer of \eqref{eq:MSELoss} collapses to the conditional expectation $\mathbb E \left[ \X_{t+1} | \X_t = \x_t \right]$, since the labels $\bfxi_t^{i}$ are drawn independently from $\bfx_t^{i}$. This stresses that the map $F$ that we learn with Stochastic Lifting does \emph{not} converge in a distributional sense to the conditional law $\rho(\cdot | \x_t)$ in the conventional limit of taking the number of data points to infinity, $M \to \infty$.

However, this collapse is avoidable at finite sample size. We argue that if the label dimension $\ldim$ is allowed to scale with the number of data points $T \times M$, and if the network $F_{\bftheta}$ is sufficiently expressive, then the minimizer of the \emph{empirical} mean-squared error loss can interpolate the training data as in \eqref{eq:Labeling:FInterpolates} and does not collapse to the conditional expectation.
In particular, for finitely many training data points $M < \infty$, we can achieve zero training error (interpolation/memorization) by ensuring that $F_{\bftheta}$ is expressive enough while at the same time choosing the label dimension $\ldim$ so high that the minimal Lipschitz constant \eqref{eq:LipschitzXi} is controlled, in which case a smooth $F_{\bftheta}$ can be found. In practice, Adam with weight decay, normalization layers, and standard architecture choices often biases optimization towards learning globally smooth interpolants; see Appendix~\ref{appx:training}.
In this regime, we expect $F_{\bftheta}$ to behave as discussed in  Proposition~\ref{prop:Main}, implying that evaluating $F_{\bftheta}$ at fresh labels yields outputs that approximately follow the conditional distribution $\rho(\cdot | \x_t)$. 

As a side remark, we note that in the case of linear least-squares regression, it has been shown that when the ratio $\ldim/(M  (T + 1))$ between dimension $\ldim$ and number of data points $M  (T + 1)$ remains fixed in the limit $\ldim, M \to \infty$, then collapse is avoided \cite{10.1214/21-AOS2133}.

\begin{figure*}[t]
  \centering
  \begin{minipage}[t]{0.54\textwidth}
    \centering
    \includegraphics[width=\linewidth]{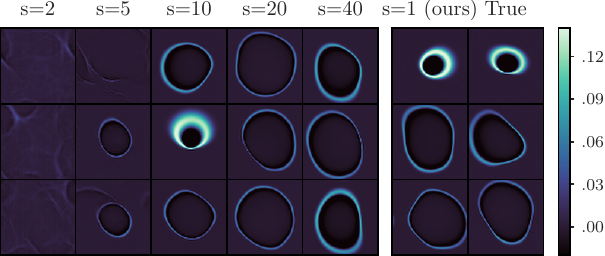}\\%[-0.1ex]
    \includegraphics[width=\linewidth]{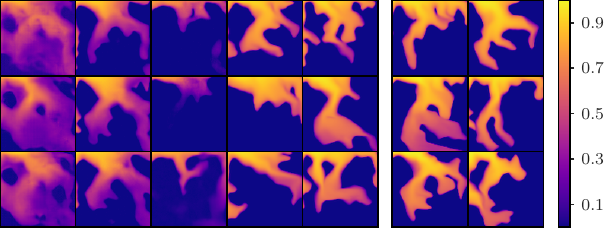}
  \end{minipage}
  \hspace*{0.2cm}
  \begin{minipage}[t]{0.39\textwidth}
    \centering
    \input{plots/wave/diff_hist_2grid.pgf}
    \hfill
    \input{plots/lanl/diff_hist_2grid.pgf}
  \end{minipage}
  \caption{\textbf{Left}: Stochastic Lifting generates diverse plausible trajectories in just one step per next state (time step), where ARDMs only begin to generate plausible trajectories when using 20 diffusion steps. \textbf{Right}: We compute the crossing time for each trajectory (see appendix~\ref{appx:metrics}) and plot the distribution across 512 generated trajectories vs the ground truth. Stochastic Lifting accurately approximates the ground truth data in distribution, even outperforming ARDM using 40 diffusion steps.}
  \label{fig:four_grouped}
\end{figure*}

\section{Demonstration on an SDE Example}
We demonstrate Stochastic Lifting on a duffing oscillator with random forcing, which is described by an SDE with two-dimensional states; see  Appendix~\ref{appx:duffsde}.  
We generate training data,  augment each pair $(\x_t, \x_{t + 1})$ with a label, and then train $F_{\bftheta}$ represented as a multi-layer perceptron (MLP) with the loss \eqref{eq:MSELoss}. At inference time, we roll out the learned $F_{\bftheta}$ as described above from a new sample of our initial distribution and fresh labels in each time step.

\vspace*{-0.25cm}\paragraph{Stochastic Lifting crucially relies on the smoothness premise} In Figure~\ref{fig:sdestudy} (left), we plot $W_2$ error on test data against the percentage of training pairs $(\x_t, \x_{t + 1})$ that are shuffled prior to training. Concretely, we choose a specified percentage of all training trajectories and, at each time step, independently permute the corresponding $\x_{t + 1}$ values, thereby breaking the pairing between $\x_t$ and $\x_{t + 1}$. As this shuffle percentage increases, we progressively move away from the true transition coupling toward a product coupling between the time marginals $\rho_t$, in which $\x_t$ and $\x_{t + 1}$ are nearly independent for the shuffled pairs. We observe that increasing the shuffling percentage leads to a larger $W_2$ error, indicating that Stochastic Lifting indeed critically depends on the smoothness premise in the sense that the conditional transition law $\rho(\cdot | \x_t)$ varies smoothly with $\x_t$ and next states can be interpreted as small perturbations of the current state. 

This experiment directly probes the coupling condition discussed after Corollary~\ref{corr:DTPlusL}. When the training pairs come from a  finite-step transition,  the update is typically  $\x_{t+1}=\x_t+\Delta_{\rd t}(\x_t,\bfxi)$, with $\Delta_{\rd t}$ regular on the relevant data domain and, in favorable regimes, small in the time step. Shuffling destroys this structure so that the next state is increasingly drawn as if it were an independent sample from the marginal distribution rather than a sample from the   conditional law. The degradation in $W_2$ under shuffling is therefore consistent with the prediction that Stochastic Lifting is most effective in the strongly coupled regime. This also highlights that for SDE-driven data, refining the time step does not make the problem easier indefinitely, because drift increments scale like $\rd t$ whereas Brownian increments scale like $\sqrt{\rd t}$, so at sufficiently small $\rd t$ the stochastic component can dominate the regular deterministic update.

\vspace*{-0.25cm}\paragraph{Higher label dimension enables interpolation and promotes smoothness}
In Figure~\ref{fig:sdestudy} (middle), we plot the final training loss versus label dimension. As the label dimension increases, the loss drops to near zero, indicating that higher-dimensional labels enable the network to interpolate the training data.
In Figure~\ref{fig:sdestudy} (right), we plot the norm of the network gradient against the label dimension. The gradient norm decreases as the label dimension increases, which indicates that the network becomes smoother in the label, supporting our discussion that higher-dimensional labels can promote smoothness.

\section{Numerical Experiments}
\label{sec:NumericalExperiments}
\vspace*{-0.15cm}\paragraph{Setup}
For all of the following experiments we parametrize our map $\map$ via UNet architectures \cite{RonnebergerUnet} but our method works with other standard diffusion backbone \cite{peebles2023scalable,bao2023all,hoogeboom2023simple}. We repurpose the diffusion-time input to condition on our label $\bfxi_t$; see Appendix~\ref{appx:architecture} for details. For all datasets we provide uncurated samples from our model in Appendix~\ref{appx:all_samples}.

We stress that we can use Stochastic Lifting directly on the state space (e.g., pixels of frames) and so avoid having to rely on a latent embedding via pre-trained encoder and decoder networks as many other generative modeling methods \cite{Rombach2022}. By avoiding the latent embedding we ease the significant engineering complexity and hyper-parameter tuning that comes with latent embeddings.

\vspace*{-0.15cm}\paragraph{With just one step per next state, Stochastic Lifting achieves comparable accuracy to multi-step models on physics problems}
We consider two physics problems: first, a traveling wave through a spatially varying random medium, which is motivated by seismic wave propagation \cite{sato2012seismic}; see Appendix~\ref{appx:wave}. Second, an incompressible two-phase flow in random porous media, which is motivated by petroleum engineering (oil/water) and models of groundwater flow; see Appendix~\ref{appx:flow}. Importantly, for both wave and flow problems, the initial condition is deterministic and fixed, thus deterministic approaches such as neural operators cannot capture the stochasticity induced by the random media and permeability fields during the autoregressive roll out; see Section~\ref{sec:Challenges}.

We compare our Stochastic Lifting to an autoregressive diffusion model (ARDM) trained on the same dataset. We use the implementation given in \cite{KOHL2026108641}, also built around a UNet backbone and comparable parameter count as we use for Stochastic Lifting. We vary the number of diffusion steps $s$ and compare the accuracy to Stochastic Lifting  that uses only a single step ($s = 1$).

For the wave and flow problems, ARDM requires 40 steps to generate accurate samples; see Figure~\ref{fig:four_grouped} (left). In contrast, Stochastic Lifting generates accurate samples in one step. Figure~\ref{fig:wave-nn}--\ref{fig:lanl-nn} in the appendix show nearest-neighbor trajectories and generated states and so provide evidence that Stochastic Lifting is indeed generating new samples instead of simply retrieving memorized ones.

To quantitatively compare the generated samples, we consider the crossing time as a physical quantity of interest, which is the time at which the wave hits the boundary and the saturation reaches the right-bottom corner of the domain, respectively; see Appendix~\ref{appx:metrics}. We plot the distribution of the crossing time of 512 generated trajectories. Stochastic Lifting accurately matches the ground truth data in distribution; see Figure~\ref{fig:four_grouped} (right). Notice that Stochastic Lifting approximates well the non-Gaussian behavior of the crossing time in the wave example (heavy tail), whereas ARDM fails to capture it even with $s = 40$ steps. Table~\ref{tbl:phys_metrics} shows the Wasserstein-2 distance between the distribution of the crossing time (WCT) obtained with Stochastic Lifting and ARDM over various steps. Table~\ref{tbl:phys_metrics} also plots the Wasserstein-2 distance (WIM) for another physics quantity of interest, the integrated mass; see Appendix~\ref{appx:metrics}.

\begin{figure*}[t]
  \begin{tikzpicture}[x=1in,y=1in]
    \node at (0,0.59) {\input{plots/study/label_dim_final_loss.pgf}};
    \node at (2.00,0.59) {\input{plots/study/label_dim_WTC.pgf}};
    \node[font=\small] at (4.5,1.33) {conditioning current state (top row)};
    \node at (4,0.75) {\includegraphics[width=1in]{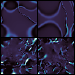}};
    \node at (5,0.75) {\includegraphics[width=1in]{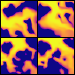}};
    \node[font=\small] at (4.5,0.07) {%
      \shortstack{
        direct map from\\
        initial condition alone (bottom row)
      }
    };
  \end{tikzpicture}
  \caption{\textbf{Left}: Choosing a higher label dimension $d$ allows the neural network to interpolate the data, resulting in one order-of-magnitude lower L2 final training loss. \textbf{Middle}: Stochastic Lifting is robust to the choice of the label dimension as long as it is sufficiently high for interpolation. \textbf{Right}: Stochastic Lifting critically depends on the conditioning on the current state to generate a high-quality next state. If we attempt to jump to the last state directly, the method breaks down.}
  \label{fig:label_dim_direct}
\end{figure*}

\begin{table}[t]
  \centering
  \begingroup
  \setlength{\tabcolsep}{0pt}% Reset initial padding
  \renewcommand{\arraystretch}{1.3}% tighter vert. padding
  \captionsetup{skip=4pt}% space between table and caption
  \small

  \sisetup{
    output-exponent-marker = \text{e},
    exponent-product       = {},
  }
  \caption{Wasserstein distance between the distribution of 512 samples from the true model vs.\ the generative model measured on various quantities of interest; see Appendix~\ref{appx:metrics}.}
  \begin{tabular*}{
      \columnwidth
    }{
      @{} l
      @{\extracolsep{\fill}}
      S[table-format=1.2e-2]
      S[table-format=1.2e-5]
      S[table-format=1.2e-1]
      S[table-format=1.2e-2] @{}
    }
    \toprule
                  & \multicolumn{2}{c}{\textbf{Wave}}              & \multicolumn{2}{c}{\textbf{Flow}}                     \\
    \cmidrule(lr){2-3}\cmidrule(lr){4-5}
                  & \multicolumn{1}{c}{\textbf{WCT}\,$\downarrow$}
                  & \multicolumn{1}{c}{\textbf{WIM}\,$\downarrow$}
                  & \multicolumn{1}{c}{\textbf{WCT}\,$\downarrow$}
                  & \multicolumn{1}{c}{\textbf{WIM}\,$\downarrow$}                                                         \\
    \midrule
    ARDM $s{=}2$  & 2.49e-2                                        & 9.11e-5                           & 1.18e-1 & 3.46e-2 \\
    ARDM $s{=}5$  & 2.13e-2                                        & 1.04e-4                           & 3.39e-2 & 4.58e-2 \\
    ARDM $s{=}10$ & 1.04e-2                                        & 6.00e-5                           & 9.13e-2 & 1.29e-1 \\
    ARDM $s{=}20$ & 7.28e-3                                        & 1.06e-4                           & 3.25e-2 & 1.27e-3 \\
    \midrule
    ARDM $s{=}40$ & 1.11e-2                                        & 9.85e-5                           & 1.86e-3 & 6.00e-4 \\
    \addlinespace[0.3em]
    \textbf{SL (ours)} $s{=}1$
                  & 4.53e-3                                        & 2.06e-5
                  & 9.47e-4                                        & 6.08e-4                                               \\
    \bottomrule
  \end{tabular*}
  \label{tbl:phys_metrics}
  \endgroup
\end{table}

\begin{table}[t]
  \centering
  \renewcommand{\arraystretch}{1}
  \captionsetup{skip=4pt}
  \centering
  \caption{Stochastic Lifting outperforms all other one-step methods on the BAIR video-generation dataset.}
  \begin{tabular*}{\linewidth}{@{\extracolsep{\fill}} l l r}
    \toprule
    \multicolumn{3}{@{}l}{\textbf{One-step methods ($s=1$)}}                 \\
    \midrule
    SV2P | \cite{babaeizadeh2017SV2P}           &            & 262.5         \\
    SAVP | \cite{lee2018stochasticSAVP}         &            & 109.8         \\
    DVD-GAN | \cite{clark2019adversarialDVDGAN} &            & 109.8         \\
    TrIVD-GAN-FP | \cite{luc2020transformation} &            & 103.3         \\
    FitVid | \cite{babaeizadeh2021fitvid}       &            & 93.6          \\
    NUWA | \cite{wu2022nuwa}                    &            & 86.9          \\
    \textbf{Stochastic Lifting (ours)}          &            & \textbf{69.0} \\
    \bottomrule
    \toprule
    \multicolumn{3}{@{}l}{\textbf{Multi-step methods}}                       \\
    \midrule
    RIVER | \cite{davtyan2023efficientRIVER}    & $s{=}100$  & 106.1         \\
    MCVD | \cite{voleti2022mcvd}                & $s{=}1000$ & 98.8          \\
    RaMViD | \cite{hoppe2022diffusionRaMViD}    & $s{=}750$  & 84.2          \\
    Video Diffusion | \cite{ho2022video}        & $s{=}16$   & 66.9          \\
    Rolling Diffusion | \cite{ruhe2024rolling}  & $s{=}32$   & 59.6          \\
    \bottomrule
  \end{tabular*}

  \label{tbl:bair_fvd}
\end{table}

\vspace*{-0.15cm}\paragraph{Stochastic Lifting is robust to label dimension, once sufficiently high to allow interpolation}
In Figure~\ref{fig:label_dim_direct} (left) we plot the final L2 training loss achieved after optimization. For low label dimensions, the neural network cannot interpolate the data, resulting in high training loss. Figure~\ref{fig:label_dim_direct} (middle) shows that for a sufficiently high label dimension (in particular once interpolation is achieved), the model produces an accurate approximation in the WCT metric (see Table~\ref{tbl:phys_metrics}). These results align well with Proposition \ref{prop:Main}. At the same time, the approach is fairly robust to the label dimension once it is sufficiently high because architectures such as UNet and the corresponding modulation (see Appendix~\ref{appx:architecture}) can compress the labels if necessary.

\vspace*{-0.15cm}\paragraph{Stochastic Lifting critically depends on conditioning of current state}
In Figure~\ref{fig:label_dim_direct} (right, top) we show the final state of the rollout from our model, which is accurate. In the bottom row, we show what happens when we train a Stochastic Lifting model which maps directly from the initial condition $\x_0$ to the final state $\x_{T-1}$ (i.e. without sequential rollout). In this case,  Stochastic Lifting breaks, producing near noise in the wave problem and non-physical, disconnected flows in the flow problem. This study provides further evidence that Stochastic Lifting crucially depends on  the ``closeness'' between the distributions corresponding to the current and next state, which aligns well with Corollary \ref{corr:DTPlusL}.

\begin{figure*}[t]
  \centering
  \begin{minipage}{2.0\columnwidth}
    \includegraphics[width=1.0\linewidth]{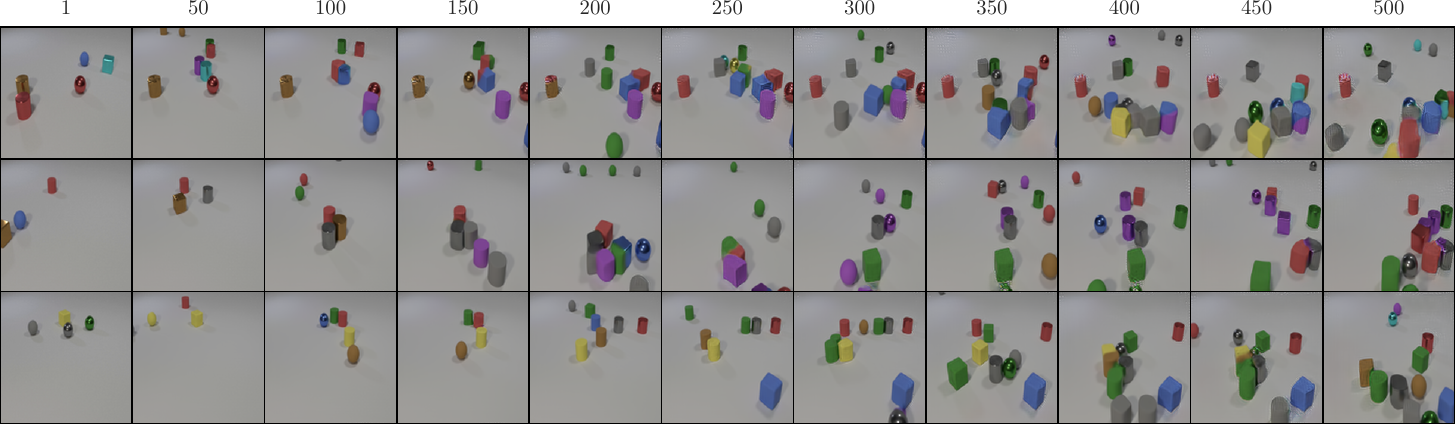}%
  \end{minipage}
  \caption{We generate 500 frames of a $128 \times 128$ color video in 0.28 seconds on an H100 GPU. The rollout is stable despite being trained on video with only 16 frames. In each trajectory, many more objects are in frame than ever occur during training.}
  \label{fig:clevrer_long}
\end{figure*}

\begin{figure}
  \begin{center}\import{plots/bair}{FVD_NFE.pgf}
  \end{center}
  \caption{On the BAIR video generation benchmark, Stochastic Lifting is state-of-the-art among one-step methods and competitive with diffusion-based methods that use over $10\times$ more compute.}
\end{figure}

\vspace*{-0.15cm}\paragraph{Stochastic Lifting scales to long-time rollouts (CLEVRER)}
The CLEVRER dataset \cite{Yi2020CLEVRER} is a synthetic dataset designed specifically for video prediction and reasoning tasks. The videos capture diverse objects moving at high speeds from off-screen and colliding with each other. There is inherent stochasticity in when and what objects enter the frame. The training videos are 16 frames long. For testing, we evaluate our model starting with unseen initial frames and then roll out for 500 frames (states), well over one order of magnitude longer than the training videos (compare to 14 frames in \cite{chen2024probabilistic}, 120 frames in \cite{davtyan2023efficientRIVER}, 64 frames in \cite{Mei_Patel_2023}). The long rollout results in objects continuing to enter from off-screen and accumulating in clusters as they collide. Importantly, Stochastic Lifting is able to stably generate the interactions and accumulation of objects, even though such accumulations do not occur in the training set.

\vspace*{-0.15cm}\paragraph{Stochastic Lifting outperforms other one-step methods on BAIR} We apply Stochastic Lifting to the Berkeley AI Research (BAIR) robot pushing dataset, which is a standard benchmark \cite{pmlr-v78-frederikebert17a}. It contains roughly $44000$ videos at $64 \times 64$ resolution. The standard prediction task is to generate 15 frames conditioned on one starting frame. We calculate FVD (using the I3D network \cite{I3netcarreira2017quo}) via the standard procedure on BAIR by generating 100 new videos starting from 256 initial frames taken from the test set. FVD is then computed between the generated and test videos. Table~\ref{tbl:bair_fvd} shows that Stochastic Lifting achieves the lowest (best) FVD \cite{unterthiner2018towardsFVD} over current state-of-the-art one-step generation approaches, and starts closing the gap to multi-step methods \cite{ruhe2024rolling}.

\vspace*{-0.15cm}\section{Conclusions and Limitations}
\label{sec:conclusion}
\emph{Conclusions} We provide evidence with our Stochastic Lifting approach that one can obtain a generative model using a standard empirical regression loss on the data augmented with stochastic labels.
The theory provides finite-sample insights by showing that interpolation plus Lipschitz regularity can control Wasserstein sampling error. The provided theoretical results serve three specific roles: First, they identify smoothness as the key property that makes one-step sampling from an interpolating map accurate (Proposition~\ref{prop:Main}), which is critical because interpolation alone could produce arbitrarily poor samples. Second, it provides actionable guidance such as increasing the label dimension improves separability and reduces the minimal Lipschitz constant (Proposition~\ref{prop:LipschitzC}), telling practitioners that increasing the dimension $d$ can help. Third, it explains why the transition-learning setting is favorable, namely that the small-perturbation structure of consecutive states further reduces the effective Lipschitz constant (\Cref{corr:DTPlusL}), clarifying why the method works for trajectory data but not for noise-to-image generation (\Cref{fig:label_dim_direct}).
Empirically, Stochastic Lifting generates stochastic physics trajectories and videos with one network evaluation per time step while matching distributional properties of the data. 

\emph{Limitations} (a) We critically build on the strong coupling given by the pair of current and next state in the training data, which means that Stochastic Lifting fails when aiming to generate images from noise.
(b) Increasing the label dimension eventually guarantees the existence of a linear interpolant. However, doing so may require proportionally more data; beyond a point, the regression function cannot be trained accurately.
(c) Ultimately, the quality of samples created by Stochastic Lifting relies heavily on the fact that the trained network builds a meaningful embedding of the data such that new labels lead to samples within the data manifold. Additional research and theoretical insights are needed to understand when this is the case and when it fails. (d) The current finite-sample theory applies to time marginals only and not to conditional transition laws. Furthermore, tt remains an open question to understand a meaningful population limit of Stochastic Lifting.   

\section*{Acknowledgments} 
The authors have been funded in part by the Air Force Office of Scientific Research (AFOSR), USA, award FA9550-24-1-0327, and the Defense Advanced Research Projects Agency (DARPA) under Agreement No.\ HR00112590114, as part of DARPA's Disruptioneering / The Right Space (TRS) effort. 

\section*{Impact Statement}
This paper presents work whose goal is to advance the field of Machine Learning. There are many potential societal consequences of our work, none which we feel must be specifically highlighted here.

\bibliography{main}
\bibliographystyle{icml2026}

\newpage
\appendix
\onecolumn
\section{Proofs}
\subsection{Proof of discrete Lipschitz constant bound}\label{sec:Proofs:Lipschitz}

\begin{proposition}
  Assume that all $\x_t^i$ are normalized so that $\x_t^i \in [0, 1]^n$. Assume further that the labels $\bfxi_t^i$ are sampled independently and uniformly from the unit sphere $\mathbb{S}^{d - 1}$, which corresponds to sampling standard normals and normalizing them afterwards. Let $\mathcal{I} = \{(i, t) \,|\, i = 1, \dots, M, t = 0, \dots, T - 1\}$ to define the discrete Lipschitz constant as 
  \[
  L(D_{\xi}) = \max_{\substack{(i, t), (j, s) \in \mathcal{I}\\(i, t) \neq (j, s)}} \frac{\|\bfx_{t + 1}^i - \bfx_{s + 1}^j\|_2}{\sqrt{\|\bfx_t^i - \bfx_s^j\|_2^2 + \|\bfxi_t^i - \bfxi_s^j\|_2^2}}\,.
  \]
  With
  \begin{equation}
    \label{eq:appdx:proofcdelta}
    c_{\delta} = 2\left(\frac{\sqrt{2}}{\sqrt{c}} + \sqrt{\frac{\ln(2/\delta)}{c\ln(2)}}\right)\,,
  \end{equation}
  where $c > 0$ is a constant independent of $M, d, T$, if $M \geq 2$ and
  \begin{equation}\label{appendix:eq:DAsm}
  d \geq \max\{c_{\delta}^2 \ln((T + 1)M), 2\},
  \end{equation}
  then with probability at least $1 - \delta$, 
 \begin{equation}
    \label{eq:appdx:BoundLipschitz}
    L(\mathcal{D}_{\xi}) \leq \frac{\sqrt{n}}{\sqrt{2}}\left(1 + c_{\delta}\sqrt{\frac{\ln((T + 1)M)}{d}}\right)\,.
  \end{equation}
\end{proposition}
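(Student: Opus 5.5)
We bound numerator and denominator of each ratio separately. Since all states lie in $[0,1]^n$, the numerator satisfies $\|\bfx_{t+1}^i - \bfx_{s+1}^j\|_2 \le \sqrt{n}$ deterministically. For the denominator we drop the state term, $\|\bfx_t^i - \bfx_s^j\|_2^2 \ge 0$, and keep only the label distance. For unit labels this distance is $\|\bfxi_t^i - \bfxi_s^j\|_2^2 = 2(1 - \bfxi_t^i\cdot\bfxi_s^j)$. Hence
\[
L(\mathcal{D}_\xi) \le \frac{\sqrt{n}}{\sqrt{2}}\,\max_{(i,t)\neq(j,s)} \bigl(1 - \bfxi_t^i\cdot\bfxi_s^j\bigr)^{-1/2},
\]
and everything reduces to a uniform upper bound on the pairwise inner products of $N = MT$ independent uniform points on $\mathbb{S}^{d-1}$.

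\textbf{Concentration for a fixed pair.} For a fixed pair, rotation invariance lets us condition on $\bfxi_s^j$, so $\bfxi_t^i\cdot\bfxi_s^j$ has the law of one coordinate of a uniform vector on the sphere. By the concentration cited above (Vershynin, Theorem 3.3.9: $\sqrt{d}\,\bfxi$ is sub-gaussian with absolute constant norm), this gives
\[
\mathbb{P}\bigl(|\bfxi_t^i\cdot\bfxi_s^j| \ge u\bigr) \le 2\exp(-c\,d\,u^2)
\]
with an absolute constant $c$. This $c$ is the constant appearing in \eqref{eq:appdx:proofcdelta}.

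\textbf{Union bound.} There are at most $N^2/2 \le ((T+1)M)^2$ pairs. A union bound therefore gives failure probability at most $2((T+1)M)^2 e^{-c d u^2}$. Setting this equal to $\delta$ requires
\[
u^2 = \frac{2\ln((T+1)M) + \ln(2/\delta)}{c\,d}.
\]
To reach the stated form, we absorb $\ln(2/\delta)$ into a multiple of $\ln((T+1)M)$. Since $M\ge 2$ and $T \ge 1$, we have $\ln((T+1)M) \ge \ln 4 \ge \ln 2$, so $\ln(2/\delta) \le \frac{\ln(2/\delta)}{\ln 2}\,\ln((T+1)M)$. Using $\sqrt{a+b}\le\sqrt a+\sqrt b$, this yields
\[
u \le \Bigl(\sqrt{2/c} + \sqrt{\ln(2/\delta)/(c\ln 2)}\Bigr)\sqrt{\ln((T+1)M)/d} = \tfrac{c_\delta}{2}\sqrt{\ln((T+1)M)/d} =: \tfrac{\varepsilon}{2}.
\]
With probability at least $1-\delta$, all inner products are at most $\varepsilon/2$.

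\textbf{Elementary inversion.} The assumption $d \ge c_\delta^2\ln((T+1)M)$ in \eqref{appendix:eq:DAsm} gives $\varepsilon \le 1$, so every inner product is at most $1/2$. For $x\in[0,1/2]$ we have $(1-x)^{-1/2} \le 1+2x$: at $x = 1/2$ the left side is $\sqrt{2}\approx 1.414 \le 2$, and by convexity of the left side and linearity of the right side the inequality holds on the whole interval. Applying this with $x = \varepsilon/2$ gives $(1-\bfxi_t^i\cdot\bfxi_s^j)^{-1/2} \le 1+\varepsilon$, which is exactly \eqref{eq:appdx:BoundLipschitz}. The condition $d\ge 2$ is only needed so that the sphere concentration applies in nondegenerate form.

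\textbf{Main obstacle.} The hard step is matching the constants exactly: pinning down the sub-gaussian constant $c$ for spherical coordinates, counting pairs with the $(T+1)M$ bookkeeping, and choosing the factor $2$ in $c_\delta$ so that the inversion step closes. Throwing away the state term in the denominator is lossy but harmless, since the claimed bound does not depend on the states beyond $[0,1]^n$.
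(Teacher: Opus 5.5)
Your proof is correct and follows the paper's argument essentially step for step. Both bound the numerator by $\sqrt{n}$ and drop the state term from the denominator. Both use the same pairwise tail bound $2\exp(-cd\epsilon^2)$ and the same union bound via $\binom{MT}{2}\le((T+1)M)^2$ and $\ln((T+1)M)\ge\ln 2$. The only difference is the source of the tail bound: the paper derives it from Lipschitz concentration on the sphere (Vershynin, Exercise 5.1.12) rather than from sub-gaussianity of the spherical law. Your closing inversion $(1-\lambda/2)^{-1/2}\le 1+\lambda$ for $\lambda\in[0,1]$ is the same inequality the paper writes as $1-\lambda/2\ge(1+\lambda)^{-2}$.
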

\begin{proof}
  First, notice that
  \[
    \Delta_{\text{max}} = \max_{\substack{(i, t) \neq (j, s)\\t,s = 0, \dots, T}} \|\x_t^i - \x_s^j\|_2 \leq \sqrt{n}
  \]
  because of the normalization $\x_t^i \in [0, 1]^n$.
  Therefore, for every distinct pair $(i, t) \neq (j, s)$, let us consider
  \[
    \frac{\|\x_{t + 1}^i - \x_{s + 1}^j\|_2}{ \sqrt{\|\x_t^i - \x_s^j\|_2^2 + \|\bfxi_t^i - \bfxi_s^j\|_2^2}} \leq \frac{\sqrt{n}}{\|\bfxi_t^i - \bfxi_s^j\|_2}\,,
  \]
  because $\|\x_t^i - \x_s^j\|_2 \geq 0$ and $\Delta_{\text{max}} \leq \sqrt{n}$. Thus, it suffices to lower bound $\min_{(i, t) \neq (j, s)} \|\bfxi_t^i - \bfxi_s^j\|_2$.

  Because the labels $\bfxi_t^i$ are independently and uniformly sampled from the unit sphere $\mathbb{S}^{d - 1}$, for the inner product, we have the following concentration inequality  that follows from \citep[Exercise~5.1.12]{Vershynin2018}
  \begin{equation}
    \label{eq:Appdx:ProofCBound}
    \mathbb{P}[|\langle \bfxi_t^i, \bfxi_s^j\rangle | \geq \epsilon] \leq 2 \exp\left(-cd \epsilon^2\right)\,,\qquad \epsilon \in [0, 1]
  \end{equation}
  for a universal constant $c > 0$. (To see \eqref{eq:Appdx:ProofCBound}: Set $f_{\bfu}: \mathbb{S}^{d - 1} \to \mathbb{R}, \bfz \mapsto \langle \bfz, \bfu\rangle$ for a fixed $\bfu \in \mathbb{S}^{d - 1}$. For any $\bfy, \bfz \in \mathbb{S}^{d - 1}$ obtain that $\|f_{\bfu}(\bfy) - f_{\bfu}(\bfz)\|_2 = \|\langle \bfy - \bfz, \bfu\rangle\|_2 \leq \|\bfy - \bfz\|_2\|\bfu\|_2 = \|\bfy - \bfz\|_2$ because $\|\bfu\|_2 = 1$ by definition. Thus the Lipschitz constant of $f_{\bfu}$ is one. Now notice that $\mathbb{E}[f_{\bfu}(\bfz)] = 0$ because $\mathbb{E}[\bfz] = 0$ for $\bfz$ uniformly on $\mathbb{S}^{d - 1}$. Therefore with \citep[Exercise~5.1.12]{Vershynin2018} obtain that $\mathbb{P}[|f_{\bfu}(\bfz)| \geq \epsilon] \leq 2 \exp(-cd\, \epsilon^2)$ for $\epsilon \in (0, 1]$. Taking two independent $\bfxi_t^i, \bfxi_s^j$ uniformly on $\mathbb{S}^{d - 1}$, we obtain $\mathbb{P}[|\langle \bfxi_t^i, \bfxi_s^j\rangle| \geq \epsilon] = \mathbb{E}_{\bfxi_s^j}[\mathbb{P}[|\langle \bfxi_t^i, \bfxi_s^j\rangle| \geq \epsilon| \bfxi_s^j ] \leq \mathbb{E}_{\bfxi_s^j}[2 \exp(-cd\, \epsilon^2)] = 2 \exp(-cd\, \epsilon^2)$ because the right-hand side of the bound for $f_{\bfu}$ is independent of $\bfxi_s^j$.)

    Thus, we obtain
    \[
      \mathbb{P}\left[\sqrt{2(1 - |\langle \bfxi_t^i, \bfxi_s^j\rangle|} \leq \sqrt{2(1 - \epsilon)}\right] \leq 2 \exp\left(-cd\epsilon^2\right)\,,\qquad \epsilon \in [0, 1]\,,
    \]
    holds. This is useful because $\|\bfxi_t^i - \bfxi_s^j\|_2^2 = 2 - 2 \langle \bfxi_t^i, \bfxi_s^j \rangle$ and thus $\|\bfxi_t^i - \bfxi_s^j\|_2 \geq \sqrt{2(1 - |\langle \bfxi_t^i, \bfxi_s^j\rangle|)}$. Selecting a $\delta \in (0, 1)$ and using a union bound, we obtain
    \[
      \mathbb{P}\left[\min_{(i, t) \neq (j, s)} \|\bfxi_t^i - \bfxi_s^j\|_2 \geq \sqrt{2(1 - \epsilon_{\delta})}\right] \geq 1 - \delta\,,
    \]
    for
    \[
      \epsilon_{\delta} = \sqrt{\frac{1}{cd}\left(\ln\left(\binom{|\mathcal{I}|}{2}\right) + \ln(2/\delta)\right)}\,.
    \]

    Note that $|\mathcal{I}| = MT \leq (T + 1)M$ and thus $\binom{|\mathcal{I}|}{2} \leq ((T + 1)M)^2$ and
    \[
    \ln\left(\binom{|\mathcal{I}|}{2}\right) + \ln(2/\delta)  \leq 2\ln((T + 1)M) + \ln(2/\delta)\,.
    \]
    Using $\sqrt{a + b} \leq \sqrt{a} + \sqrt{b}$ and $(T + 1)M \geq 2$, we obtain
    \[
    \epsilon_{\delta} \leq \frac{\sqrt{2}}{\sqrt{c}}\sqrt{\frac{\ln((T + 1)M)}{d}} + \sqrt{\frac{\ln(2/\delta)}{c\ln(2)}}\sqrt{\frac{\ln((T + 1)M)}{d}}\,.
    \]
    Now set $c_{\delta}$ to \eqref{eq:appdx:proofcdelta} and  define 
    \begin{equation}
    \label{eq:appendix:lambda}
    \lambda = c_{\delta} \sqrt{\frac{\ln((T + 1)M)}{d}}.
    \end{equation}
    Assumption \eqref{appendix:eq:DAsm} implies $\lambda \leq 1$ and with previous estimates we obtain
\[
\epsilon_{\delta} \leq \frac{\lambda}{2} \leq \frac{1}{2}\,.
\]
Furthermore, $1 - \epsilon_{\delta} \geq 1 - \lambda/2$. Because $0 \leq \lambda \leq 1$, we have
\[
1 - \frac{\lambda}{2} \geq \frac{1}{(1 + \lambda)^2}\,.
\]
Therefore,
\[
\sqrt{2(1 - \epsilon_{\delta})} \geq \frac{\sqrt{2}}{1 + \lambda}.
\]
Consequently, with probability at least $1 - \delta$, 
\[
L(D_{\xi}) \leq \frac{\sqrt{n}}{\sqrt{2(1 - \epsilon_{\delta})}} \leq \frac{\sqrt{n}}{\sqrt{2}}\left(1 + \lambda\right)\,.
\]
Substituting \eqref{eq:appendix:lambda} for $\lambda$ yields \eqref{eq:appdx:BoundLipschitz}. 
\end{proof}

\subsection{Interpolation of training data + smoothness = bound on test data}\label{appdx:ProofInterSmoothGen}

We recall the following lemma for the sake of self-containedness:

\begin{lemma}
  \label{lemma:appdx:lipschitzW2}
  For $F: \bR^n \to \bR^m$ Lipschitz continuous with constant $L_F$ and $\mu, \nu \in \mathcal P_2(\bR^n)$ (i.e.\ with finite second moments),
  \begin{align}
    W_2(F _\sharp \mu, F _\sharp \nu) \leq L_F W_2(\mu, \nu)
  \end{align}
\end{lemma}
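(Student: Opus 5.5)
The natural route is the coupling characterization of $W_2$. Let $\gamma$ be an optimal coupling between $\mu$ and $\nu$; it exists because both measures lie in $\mathcal P_2(\bR^n)$, and the standard argument is tightness of the set of couplings plus lower semicontinuity of the quadratic cost. If one wants to avoid citing existence, it is enough to take an $\varepsilon$-optimal coupling and let $\varepsilon \to 0$ at the end. Push $\gamma$ forward under the product map $(\x, \y) \mapsto (F(\x), F(\y))$ and call the result $\tilde\gamma$.

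The first marginal of $\tilde\gamma$ is $F_\sharp \mu$ and the second is $F_\sharp \nu$, since the marginal projections commute with the product map. So $\tilde\gamma$ is an admissible coupling of $F_\sharp\mu$ and $F_\sharp\nu$. One should also note that these pushforwards lie in $\mathcal P_2(\bR^m)$: Lipschitz continuity gives $\|F(\x)\|_2 \le \|F(0)\|_2 + L_F\|\x\|_2$, so second moments stay finite.

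Then estimate the cost:
\begin{equation*}
W_2(F_\sharp\mu, F_\sharp\nu)^2 \le \int \|\bfu - \bfz\|_2^2 \, \rd\tilde\gamma(\bfu,\bfz) = \int \|F(\x) - F(\y)\|_2^2 \, \rd\gamma(\x,\y) \le L_F^2 \int \|\x-\y\|_2^2 \, \rd\gamma(\x,\y) = L_F^2 W_2(\mu,\nu)^2 .
\end{equation*}
The first inequality holds because $W_2^2$ is an infimum over couplings. The equality is the change of variables for pushforward measures. The second inequality is the pointwise Lipschitz bound. Taking square roots gives the claim.

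There is no real obstacle here. The only points needing care are the existence, or approximation, of an optimal coupling and the measurability of the pushforward; the latter is immediate because $F$ is continuous.
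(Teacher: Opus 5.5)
Your proposal is correct and follows the paper's proof essentially verbatim: you push the optimal coupling forward under $F \times F$, verify its marginals, and bound the transport cost pointwise via the Lipschitz estimate. Your remarks on the existence of an optimal coupling (or an $\varepsilon$-optimal substitute) and on $F_\sharp\mu, F_\sharp\nu \in \mathcal P_2(\mathbb R^m)$ add small rigor points that the paper leaves implicit.
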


\begin{proof}
  Assume $\gamma^* \in \Gamma(\mu, \nu)$ is the optimal coupling between $\mu$ and $\nu$, hence
  \begin{align}
    \gamma^*(\cdot, \bR^n) = \mu, \; \gamma^*(\bR^n, \cdot) = \nu, \text{ and } \int_{\bR^n \times \bR^n} |\x - \y|^2 \, \rd \gamma^*(x, y) = W_2(\mu, \nu)^2.
  \end{align}
  Consider the coupling $\gamma_F := (F \times F) _\sharp \gamma^*$. Take $B \subset \bR^m$ measurable. Then,
  \begin{multline}
    \gamma_F(B \times \bR^m) = ((F \times F) _\sharp \gamma^*)(B \times \bR^m) = \gamma^*( (F \times F)^{-1}(B \times \bR^m) ) \\ = \gamma^*( F^{-1}(B) \times F^{-1}(\bR^m)) = \gamma^* ( F^{-1}(B) \times \bR^n ) = \mu(F^{-1}(B)) = (F _\sharp \mu)(B).
  \end{multline}
  Analogously, $\gamma_F(\bR^m \times B) = (F _\sharp \nu)(B)$. Hence $\gamma_F$ is a valid competitor for the optimal transport problem from $F _\sharp \mu$ to $F _\sharp \nu$:
  \begin{multline}
    W_2(F _\sharp \mu, F _\sharp \nu)^2 = \inf_{\gamma \in \Gamma(F _\sharp \mu, F _\sharp \nu)} \int_{\bR^m \times \bR^m} |\x - \y|^2 \, \rd \gamma(\x, \y)
    \leq \int_{\bR^m \times \bR^m} |\x - \y|^2 \, \rd \gamma_F(\x, \y) \\
    = \int_{\bR^n \times \bR^n} |F(x) - F(y)|^2 \, \rd \gamma^*(\x, \y)
    \leq L_F^2 \int_{\bR^n \times \bR^n} |\x - \y|^2 \, \rd \gamma^*(\x, \y)
    = L_F^2 W_2(\mu, \nu)^2
  \end{multline}
  Taking the square root gives the claimed result.
\end{proof}

\begin{reprop}[\ref{prop:Main}]
  Fix a time $t$. Assume:
  \begin{itemize}
    \item[\textbf{(A1)}] The training pairs $(\hat\bfx_t^i,\hat\bfx_{t+1}^i) \sim \pi_t$ are i.i.d., and independently the training labels $\hat\bfxi_t^i \sim \nu$ are i.i.d.\ for $i=1,\dots,M$. The test data $(\tilde\bfx_t^i,\tilde\bfx_{t+1}^i,\tilde\bfxi_t^i)_{i=1}^{\tilde M}$ is sampled in the same way and is independent of the training data.
    \item[\textbf{(A2)}] All laws involved have finite second moments.
    \item[\textbf{(A3)}] The map $F: \mathcal X \times \bR^d \to \mathcal X$ interpolates the training data, $F(\hat\bfx_t^i,\hat\bfxi_t^i)=\hat\bfx_{t+1}^i$ for $i=1,\dots,M$, and admits a deterministic uniform Lipschitz bound $\mathrm{Lip}(F) \leq L_F$ over all training datasets under consideration.
    \item[\textbf{(A4)}] Denote by $\eta_t$ the joint law of a paired sample $(\bfx_t,\bfxi_t)$ with $\bfx_t \sim \rho_t$ and $\bfxi_t \sim \nu$ drawn independently. There exist constants $C_{\mathrm{in}}, C_{\mathrm{out}} > 0$ and exponents $\alpha_{\mathrm{in}}, \alpha_{\mathrm{out}} > 0$ such that the empirical Wasserstein convergence rates
    \begin{align*}
    \mathbb E\,W_2(\eta_t, \hat\eta_t^N)^2 \leq C_{\mathrm{in}} N^{-2/\alpha_{\mathrm{in}}},\quad
    \mathbb E\,W_2(\rho_{t+1}, \hat\rho_{t+1}^N)^2 \leq C_{\mathrm{out}} N^{-2/\alpha_{\mathrm{out}}}
    \end{align*}
    hold, where $\hat\eta_t^N := \tfrac{1}{N}\sum_{i=1}^N \delta_{(\bfx_t^i,\bfxi_t^i)}$ denotes the empirical measure of $N$ i.i.d.\ paired samples from $\eta_t$, and $\hat\rho_{t+1}^N$ denotes the empirical measure of $N$ i.i.d.\ samples from $\rho_{t+1}$.
  \end{itemize}
  Let $\hat{\bfx}_{t+1}^{i} = F(\tilde{\bfx}_t^i, \tilde{\bfxi}_t^i)$ for $i = 1, \dots, \tilde{M}$ be the generated test samples. Then,
  \begin{equation}
    \mathbb E_{\mathcal{D}_{\xi}, \mathcal{D}_{\xi}^{\text{test}}} \left[ W_2( \hat{\rho}_{t + 1}, \tilde \rho_{t+1} )^2 \right] \leq C ( 1 + L_F^2 ) {\min(M, \tilde M)}^{-2/\alpha}\,,
  \end{equation}
  where $\hat{\rho}_{t + 1}$ and $\tilde{\rho}_{t + 1}$ are the empirical measures of the generated samples $\{\hat{\bfx}_{t + 1}^i\}_{i = 1}^{\tilde{M}}$ and the test samples $\{\tilde{\bfx}_{t + 1}^i\}_{i = 1}^{\tilde{M}}$, respectively; $\alpha := \max(\alpha_{\mathrm{in}}, \alpha_{\mathrm{out}})$ and $C := 8\max(C_{\mathrm{in}}, C_{\mathrm{out}})$.
  When $F(\x, \bfxi) = \x + \rd t R(\x, \bfxi)$ with $R$ Lipschitz on the lifted domain with constant $L_R$, the bound can be refined to
  \begin{equation}
    \mathbb E_{\mathcal{D}_{\xi}, \mathcal{D}_{\xi}^{\text{test}}} \left[ W_2( \hat{\rho}_{t + 1}, \tilde \rho_{t+1} )^2 \right] \leq C ( 1 + \rd t^2 L_R^2 ) {\min(M, \tilde M)}^{-2/\alpha}\,.
  \end{equation}
  If instead $F(\x, \bfxi) = \x + \rd t \, b(\x) + \varepsilon \sigma( \bfxi)$ with $b, \sigma$ Lipschitz with constants $L_b, L_\sigma$, then
  \begin{equation}
    \mathbb E_{\mathcal{D}_{\xi}, \mathcal{D}_{\xi}^{\text{test}}} \left[ W_2( \hat{\rho}_{t + 1}, \tilde \rho_{t+1} )^2 \right] \leq C ( 1 + \rd t^2 L_b^2 + \varepsilon^2 L_\sigma^2 ) {\min(M, \tilde M)}^{-2/\alpha}\,.
  \end{equation}
\end{reprop}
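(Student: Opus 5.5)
The plan is a triangle inequality through two intermediate empirical measures built from the training data. Write $\hat\eta_t^{M} = \frac1M\sum_{i}\delta_{(\hat\bfx_t^i,\hat\bfxi_t^i)}$ for the lifted training inputs, $\tilde\eta_t^{\tilde M}$ for the lifted test inputs, and $\hat\rho_{t+1}^{M}$ for the empirical measure of the training targets $\hat\bfx_{t+1}^i$. By construction, $\hat\rho_{t+1} = F_\sharp \tilde\eta_t^{\tilde M}$. By interpolation (A3), $F_\sharp \hat\eta_t^M = \hat\rho_{t+1}^M$ exactly. We then split
\[
W_2(\hat\rho_{t+1},\tilde\rho_{t+1}) \le W_2(F_\sharp\tilde\eta_t^{\tilde M}, F_\sharp\hat\eta_t^{M}) + W_2(\hat\rho_{t+1}^M,\tilde\rho_{t+1}).
\]
Lemma~\ref{lemma:appdx:lipschitzW2} bounds the first term by $L_F\,W_2(\tilde\eta_t^{\tilde M},\hat\eta_t^M)$. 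Since $F$ may depend on the training data, it is important that $L_F$ is a deterministic uniform bound (A3). Otherwise the expectation could not be pulled through.

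Next, we triangulate each remaining term through the population law. We write $W_2(\tilde\eta_t^{\tilde M},\hat\eta_t^M)\le W_2(\tilde\eta_t^{\tilde M},\eta_t)+W_2(\eta_t,\hat\eta_t^M)$ and $W_2(\hat\rho_{t+1}^M,\tilde\rho_{t+1})\le W_2(\hat\rho_{t+1}^M,\rho_{t+1})+W_2(\rho_{t+1},\tilde\rho_{t+1})$. We use $(a+b)^2\le 2a^2+2b^2$ twice. First, $W_2(\hat\rho_{t+1},\tilde\rho_{t+1})^2 \le 2L_F^2 W_2(\tilde\eta,\hat\eta)^2 + 2W_2(\hat\rho^M,\tilde\rho)^2$. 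Then each of these is bounded by $2\times$ the sum of the squared distances to the population law. This produces the factor $4$ in front of each of the four terms. Applying (A4) then requires that the training inputs $(\hat\bfx_t^i,\hat\bfxi_t^i)$ are i.i.d.\ from $\eta_t = \rho_t\otimes\nu$, which follows from (A1) because the $\hat\bfx_t^i$ have marginal $\rho_t$ and the labels are drawn independently. The same holds for the test inputs, and the targets are i.i.d.\ from $\rho_{t+1}$ since $\rho_{t+1}$ is the second marginal of $\pi_t$. Taking expectations, the four terms give at most $4L_F^2 C_{\mathrm{in}}(\tilde M^{-2/\alpha_{\mathrm{in}}}+M^{-2/\alpha_{\mathrm{in}}}) + 4C_{\mathrm{out}}(M^{-2/\alpha_{\mathrm{out}}}+\tilde M^{-2/\alpha_{\mathrm{out}}})$.

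To finish, set $N=\min(M,\tilde M)$ and $\alpha=\max(\alpha_{\mathrm{in}},\alpha_{\mathrm{out}})$. Each term is bounded by its constant times $N^{-2/\alpha}$, because $N^{-2/\alpha_\ast}\le N^{-2/\alpha}$ for $N\ge1$ and $\alpha_\ast\le\alpha$. The total is then at most $8\max(C_{\mathrm{in}},C_{\mathrm{out}})(1+L_F^2)N^{-2/\alpha}$, which is the claim.

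For the refined statements, we reuse the same decomposition. The only change is to replace the first term by a sharper Lipschitz-pushforward estimate. For $F=\x+\rd t R$, we take the optimal coupling $\gamma^*$ of the two lifted empirical measures. We then bound $\|F(z)-F(z')\|^2$ for $z=(\x,\bfxi)$ and $z'=(\x',\bfxi')$ by $2\|\x-\x'\|^2+2\rd t^2L_R^2\|z-z'\|^2$, which is at most $2(1+\rd t^2L_R^2)\|z-z'\|^2$. Integrating against $\gamma^*$ as in the proof of Lemma~\ref{lemma:appdx:lipschitzW2} gives the factor $(1+\rd t^2L_R^2)$, with the extra $2$ absorbed into the constant (or $C$ adjusted accordingly). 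The third form is handled the same way with a three-term splitting, which yields $1+\rd t^2L_b^2+\varepsilon^2L_\sigma^2$ up to a constant.

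The main obstacle is bookkeeping rather than conceptual: making sure the expectation over the data-dependent $F$ is legitimate, which rests on the uniform $L_F$, and tracking the constants so they land exactly at $8\max(C_{\mathrm{in}},C_{\mathrm{out}})$. In the refined cases the extra factor from splitting the identity part may force a slightly larger $C$, and I would state that explicitly if needed.
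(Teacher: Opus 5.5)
Your proposal is correct and follows the paper's proof essentially step for step. It uses the same split through $F_\sharp\hat\eta_t=\hat\rho_{t+1}^{\mathrm{train}}$ and the pushforward lemma with the deterministic $L_F$; it triangulates both terms through $\eta_t$ and $\rho_{t+1}$ via (A4), reaches the same constant $8\max(C_{\mathrm{in}},C_{\mathrm{out}})$, and uses the same coupling argument for the structured cases. Your caveat on the refined constants is also warranted: substituting $2(1+\rd t^2L_R^2)$ for $L_F^2$ and proceeding as before actually yields $C(3+2\rd t^2L_R^2)\le 3C(1+\rd t^2L_R^2)$, and the three-term form yields $4C(1+\rd t^2L_b^2+\varepsilon^2L_\sigma^2)$, so the paper's claim of the same $C$ there is not delivered by its own argument.
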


\begin{remark}
  The bound controls the one-step \emph{marginal} of the generated next-state samples at time $t+1$, drawn under the law $\rho_t$ of the current state. It does not by itself establish conditional accuracy at each fixed current state $\bfx_t$.
\end{remark}

\begin{remark}
  Under the compactness (or sub-Gaussian tail) assumptions required for the empirical Wasserstein concentration results of \cite{weed_sharp_2019}, the empirical-measure terms appearing in the proof also admit high-probability analogues by a union bound; we state the result in expectation for simplicity.
\end{remark}

\begin{proof}
  Denote by $\hat\eta_t := \tfrac{1}{M}\sum_{i=1}^{M} \delta_{(\hat\bfx_t^i,\hat\bfxi_t^i)}$ the paired empirical of the training inputs and by $\tilde\eta_t := \tfrac{1}{\tilde M}\sum_{i=1}^{\tilde M} \delta_{(\tilde\bfx_t^i,\tilde\bfxi_t^i)}$ the analogous test paired empirical. By assumption \textbf{(A3)}, $F$ interpolates the training data, so the pushforward of the paired training empirical coincides with the training empirical of the next state,
  \begin{align}
    F_\sharp \hat\eta_t = \frac{1}{M}\sum_{i=1}^{M} \delta_{F(\hat\bfx_t^i,\hat\bfxi_t^i)} = \frac{1}{M}\sum_{i=1}^{M} \delta_{\hat\bfx_{t+1}^i} =: \hat\rho_{t+1}^{\mathrm{train}}\,.
  \end{align}
  The generated empirical from the test inputs satisfies $\hat\rho_{t+1} = F_\sharp \tilde\eta_t$. By the triangle inequality,
  \begin{align}
    W_2( \hat\rho_{t+1}, \tilde\rho_{t+1} ) \leq
    \underbrace{W_2( F_\sharp \tilde\eta_t, F_\sharp \hat\eta_t )}_{=: \, (i)} +
    \underbrace{W_2( \hat\rho_{t+1}^{\mathrm{train}}, \tilde\rho_{t+1} )}_{=: \, (ii)}\,,
  \end{align}
  where we used $F_\sharp\hat\eta_t = \hat\rho_{t+1}^{\mathrm{train}}$ in $(ii)$. Squaring and using $(a+b)^2 \leq 2 a^2 + 2 b^2$,
  \begin{align}
    W_2( \hat\rho_{t+1}, \tilde\rho_{t+1} )^2 \leq 2\, W_2( F_\sharp \tilde\eta_t, F_\sharp \hat\eta_t )^2 + 2\, W_2( \hat\rho_{t+1}^{\mathrm{train}}, \tilde\rho_{t+1} )^2\,.
    \label{eq:appdx:triangle}
  \end{align}

  \emph{Bound on $(i)$.} By Lemma~\ref{lemma:appdx:lipschitzW2} and assumption \textbf{(A3)},
  \begin{align}
    W_2( F_\sharp \tilde\eta_t, F_\sharp \hat\eta_t )^2 \leq L_F^2 \, W_2( \tilde\eta_t, \hat\eta_t )^2\,.
  \end{align}
  Since $\hat\eta_t$ and $\tilde\eta_t$ are i.i.d.\ paired empirical measures of $\eta_t$ with $M$ and $\tilde M$ samples respectively, the triangle inequality and assumption \textbf{(A4)} give
  \begin{align}
    \mathbb E\,W_2( \tilde\eta_t, \hat\eta_t )^2 \leq 2\, \mathbb E\,W_2(\tilde\eta_t,\eta_t)^2 + 2\,\mathbb E\,W_2(\eta_t,\hat\eta_t)^2 \leq 2 C_{\mathrm{in}} \big( M^{-2/\alpha_{\mathrm{in}}} + \tilde M^{-2/\alpha_{\mathrm{in}}} \big)\,.
  \end{align}

  \emph{Bound on $(ii)$.} By assumption \textbf{(A1)}, $\hat\rho_{t+1}^{\mathrm{train}}$ and $\tilde\rho_{t+1}$ are independent empirical measures of $\rho_{t+1}$ with $M$ and $\tilde M$ samples respectively, so by the same argument and assumption \textbf{(A4)},
  \begin{align}
    \mathbb E\,W_2( \hat\rho_{t+1}^{\mathrm{train}}, \tilde\rho_{t+1} )^2 \leq 2 C_{\mathrm{out}} \big( M^{-2/\alpha_{\mathrm{out}}} + \tilde M^{-2/\alpha_{\mathrm{out}}} \big)\,.
  \end{align}

  Taking expectation in \eqref{eq:appdx:triangle} and combining the two bounds,
  \begin{align}
    \mathbb E\,W_2( \hat\rho_{t+1}, \tilde\rho_{t+1} )^2 \leq 4 L_F^2 C_{\mathrm{in}} \big( M^{-2/\alpha_{\mathrm{in}}} + \tilde M^{-2/\alpha_{\mathrm{in}}} \big) + 4 C_{\mathrm{out}} \big( M^{-2/\alpha_{\mathrm{out}}} + \tilde M^{-2/\alpha_{\mathrm{out}}} \big)\,.
  \end{align}
  Setting $\alpha := \max(\alpha_{\mathrm{in}},\alpha_{\mathrm{out}})$ and $C := 8\max(C_{\mathrm{in}},C_{\mathrm{out}})$, and using $M^{-2/\alpha_{\mathrm{in}}} + \tilde M^{-2/\alpha_{\mathrm{in}}} \leq 2\min(M,\tilde M)^{-2/\alpha}$ (and similarly for $\alpha_{\mathrm{out}}$),
  \begin{align}
    \mathbb E\,W_2( \hat\rho_{t+1}, \tilde\rho_{t+1} )^2 \leq C \,(1 + L_F^2)\, \min(M,\tilde M)^{-2/\alpha}\,,
  \end{align}
  which is the claim of Proposition~\ref{prop:Main}.

  \emph{Proof of the corollary.} Now suppose $F(\x,\bfxi) = \x + \rd t\, R(\x,\bfxi)$ with $R$ Lipschitz with constant $L_R$. Let $\gamma^*$ denote the optimal coupling between $\hat\eta_t$ and $\tilde\eta_t$ on $(\mathcal X \times \bR^d)^2$. Then, by Lemma~\ref{lemma:appdx:lipschitzW2}'s construction,
  \begin{align}
    W_2( F_\sharp \tilde\eta_t, F_\sharp \hat\eta_t )^2
    \leq \int |F(\tilde\x,\tilde\bfxi) - F(\hat\x,\hat\bfxi)|^2 \, \rd\gamma^*\big((\hat\x,\hat\bfxi),(\tilde\x,\tilde\bfxi)\big)\,.
  \end{align}
  Using $F(\x,\bfxi) = \x + \rd t\, R(\x,\bfxi)$ and the inequality $|a+\rd t\, b|^2 \leq 2|a|^2 + 2\rd t^2 |b|^2$,
  \begin{align}
    |F(\tilde\x,\tilde\bfxi) - F(\hat\x,\hat\bfxi)|^2
    &= |\tilde\x - \hat\x + \rd t (R(\tilde\x,\tilde\bfxi) - R(\hat\x,\hat\bfxi))|^2 \\
    &\leq 2\,|\tilde\x - \hat\x|^2 + 2\rd t^2\, L_R^2\,|(\tilde\x,\tilde\bfxi) - (\hat\x,\hat\bfxi)|^2\,.
  \end{align}
  We now integrate against $\gamma^*$ and use that the marginal of $|(\tilde\x,\tilde\bfxi) - (\hat\x,\hat\bfxi)|^2$ is bounded by $W_2(\tilde\eta_t,\hat\eta_t)^2$. The marginal of $|\tilde\x - \hat\x|^2$ is bounded by the same quantity, since the $\mathcal X$-projection of $\gamma^*$ couples the $\bfx$-marginals of $\tilde\eta_t$ and $\hat\eta_t$. We arrive at 
  \begin{align}
    W_2( F_\sharp \tilde\eta_t, F_\sharp \hat\eta_t )^2 \leq \int |F(\tilde\x,\tilde\bfxi) - F(\hat\x,\hat\bfxi)|^2 \, \rd\gamma^*\big((\hat\x,\hat\bfxi),(\tilde\x,\tilde\bfxi)\big)  \leq 2\, (1 + \rd t^2 L_R^2) \, W_2(\tilde\eta_t,\hat\eta_t)^2\,.
  \end{align}
  Substituting this into \eqref{eq:appdx:triangle} in place of the Lipschitz bound and proceeding as before yields
  \begin{align}
    \mathbb E\,W_2( \hat\rho_{t+1}, \tilde\rho_{t+1} )^2 \leq C\,(1 + \rd t^2 L_R^2)\, \min(M,\tilde M)^{-2/\alpha}\,.
  \end{align}

  \emph{SDE form.} For $F(\x,\bfxi) = \x + \rd t\, b(\x) + \varepsilon\, \sigma(\bfxi)$ with $b$ and $\sigma$ Lipschitz with constants $L_b, L_\sigma$, we use the three-term inequality $|a + \rd t\, b + \varepsilon\, c|^2 \leq 3\,(|a|^2 + \rd t^2|b|^2 + \varepsilon^2|c|^2)$ and bound, under $\gamma^*$,
  \begin{align}
    |F(\tilde\x,\tilde\bfxi) - F(\hat\x,\hat\bfxi)|^2 &\leq 3\,|\tilde\x - \hat\x|^2 + 3\rd t^2 L_b^2 |\tilde\x - \hat\x|^2 + 3\varepsilon^2 L_\sigma^2 |\tilde\bfxi - \hat\bfxi|^2\,.
  \end{align}
  Proceeding as in the corollary,
  \begin{align}
    \mathbb E\,W_2( \hat\rho_{t+1}, \tilde\rho_{t+1} )^2 \leq C\,(1 + \rd t^2 L_b^2 + \varepsilon^2 L_\sigma^2)\, \min(M,\tilde M)^{-2/\alpha}\,.
  \end{align}
\end{proof}

\section{Explicit construction of an interpolating map}

Assume $\bfxi_t$ is distributed uniformly on the unit sphere $\mathbb S^{d-1}$ and $d \geq M$. Consider the following general form for $F$:
\begin{align}
  F(\x_t, \bfxi_t) = b(\x_t) + \sum_{i, k = 1}^{M} (\hat \x_{t+1}^i - b(\x_t)) \hat{\mathbb G}_{ik}^{-1} \hat \bfxi_t^k \cdot \bfxi_t,
\end{align}
where $b: \mathcal X \to \mathcal X$ captures the mean behavior, i.e. $b(\x_t) = \mathbb E \left [\X_{t+1} | \X_t = \x_t \right ]$, $(\hat \x_{t+1}^i, \hat \bfxi_t^i), \; i = 1, \dots, M$ are the training points, and $\hat{\mathbb G}_{ik} = \hat \bfxi_t^i \cdot \bfxi^k_t$. Note that $\hat {\mathbb G}$ is almost surely invertible when $d \geq M$.

This function interpolates the training data:
\begin{align}
  F(\hat \x_t^j, \hat \bfxi_t^j) & = b(\x_t) + \sum_{i, k = 1}^{M} (\hat \x_{t+1}^i - b(\x_t)) \hat{\mathbb G}_{ik}^{-1} \hat \bfxi_t^k \cdot \bfxi_t^j \\
                                 & = b(\x_t) + \sum_{i = 1}^{M} (\hat \x_{t+1}^i - b(\x_t)) \delta_{ij} = \hat \x_{t+1}^j \quad \forall j = 1, \dots, M
\end{align}

At the same time, it is an affine function in $\bfxi_t$ and as regular in $\x_t$ as $b$ is, and the latter is smooth in physical systems and natural videos. We can improve the regularity of $F$ by improving the conditioning of $\hat{\mathbb G}$. As discussed, $\hat{\mathbb G}$ approaches the identity matrix for large $d$ and fixed $M$.

Furthermore, note that for any $\mathbf{v} \in \mathbb S^{d-1}$, $\sqrt{d} \mathbf{v} \cdot \bfxi_t \to \eta \sim \mathcal N(0, 1)$ in distribution as $d \to \infty$ \cite{Vershynin2018}[Theorem 3.3.9]. For fixed $\x_t$, the distribution of $F(\x_t, \bfxi_t)$ is thus similar to
\begin{align}\label{appendix:AffineDecompMapA}
  b(\x_t) + \frac{1}{\sqrt d} \sum_{i = 1}^{M} (\hat \x_{t+1}^i - b(\x_t)) \eta_i \quad \text{where} \quad \eta_i \sim \mathcal N(0, 1) \; \forall i \, ,
\end{align}
namely a Gaussian perturbation of the expected value of $\X_{t+1} | \X_t = \x_t$ in the directions of other training points. We note that this explicit affine construction is illustrative only and is not the model optimized in our experiments. A general $F_{\theta}$ does not need to be affine in $\bfxi$ and can transform Gaussian into non-Gaussian conditional outputs as our experiments show. Thus, \eqref{appendix:AffineDecompMapA} should \emph{not} be interpreted as a restriction of Stochastic Lifting to Gaussian conditional marginals.

\section{Nearest Neighbor analysis}
We conduct a nearest neighbor analysis to ensure that our model generates novel samples rather than memorizing training data. Specifically, for each example: the top row shows a randomly selected generated sample, the middle row displays the closest matching sample from the training set, and the bottom row shows the closest training-set neighbor to the middle row itself. The visual variation between the generated sample (top row) and its nearest training-set neighbor (middle row) is similar in scale to the variation observed between the two training-set samples (middle and bottom rows). This demonstrates that our model generalizes effectively without simply replicating training examples.
\label{appx:wave-nn}
\begin{figure*}[!htbp]
  \centering
  \includegraphics[width=\textwidth]{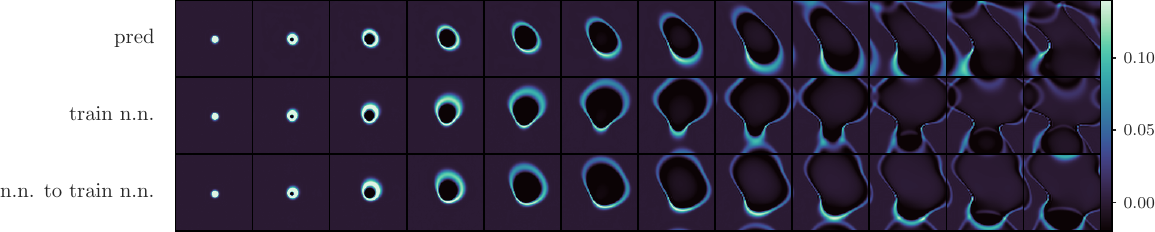}

  \includegraphics[width=\textwidth]{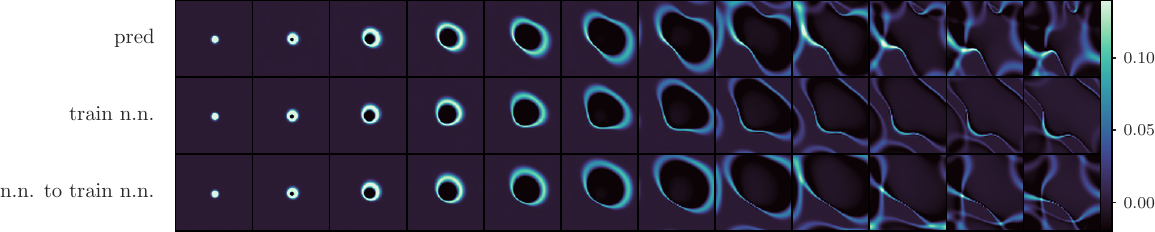}

  \includegraphics[width=\textwidth]{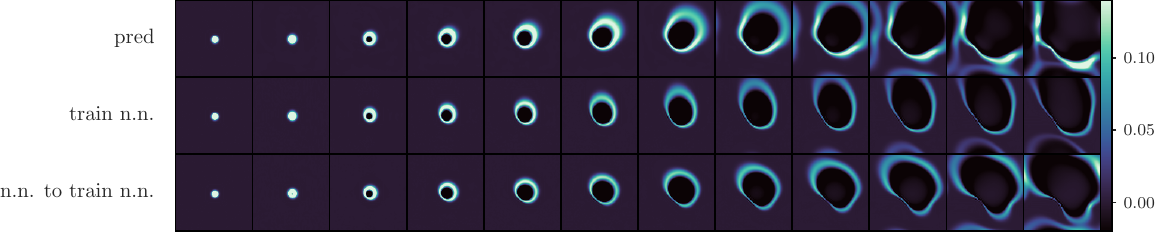}

  \caption{\textbf{Wave nearest neighbor analysis}. Of each image: top row randomly chosen generated sample, middle row nearest neighbor in the training set, bottom row nearest neighbor in the training set to the middle row. The variation between the top and middle row should be roughly similar to the variation between the middle and bottom row, indicating generalization without memorization.}
  \label{fig:wave-nn}
\end{figure*}

\label{appx:flow-nn}
\begin{figure*}[!htbp]
  \centering
  \includegraphics[width=\textwidth]{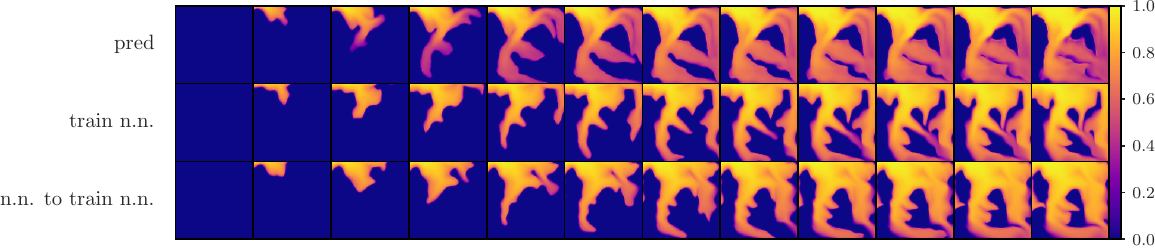}

  \includegraphics[width=\textwidth]{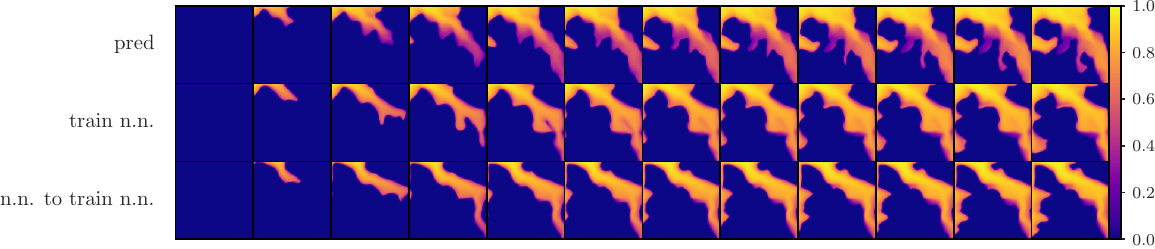}

  \includegraphics[width=\textwidth]{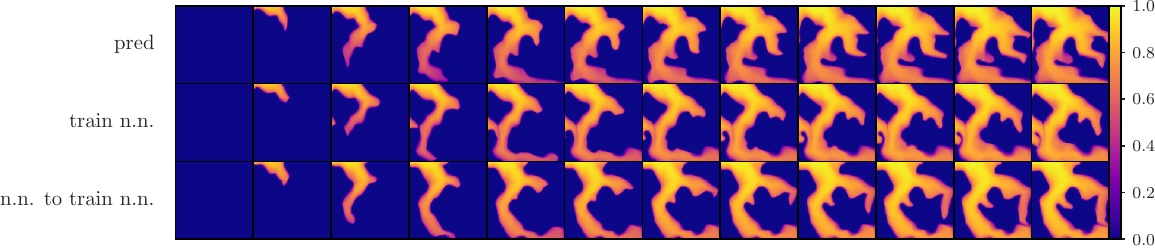}

  \caption{\textbf{Flow nearest neighbor analysis}. Of each image: top row randomly chosen generated sample, middle row nearest neighbor in the training set, bottom row nearest neighbor in the training set to the middle row. The variation between the top and middle row should be roughly similar to the variation between the middle and bottom row, indicating generalization without memorization.}
  \label{fig:lanl-nn}
\end{figure*}

\section{Numerical experiments}
\subsection{Duffing SDE}
\label{appx:duffsde}
The physical trajectories $x_t \in \bR^2$ follow the SDE  with
\begin{align}
  \frac{\rd}{\rd t}
  \begin{bmatrix}
    x_1 \\
    x_2
  \end{bmatrix}
  (t)
  =
  \begin{bmatrix}
    x_2 \\
    -0.4 x_2 + x_1 - 0.2 (x_1)^3
  \end{bmatrix}
  (t)
  +
  \begin{bmatrix}
    0 \\
    1 \\
  \end{bmatrix}
  \xi(t) \,.
\end{align}
The initial configuration is a Gaussian centered at $m_0 = [0, 10]$ with diagonal variance of strength $0.5$. The dynamics are integrated over $0 \leq t \leq 14$ for $N = 5000$ samples.

\subsection{Traveling wave through random media (Wave)}
\label{appx:wave}
The initial condition is a Gaussian bump at the center of the domain. As the wave evolves, the random medium leads to different wave speeds that lead to a deformation of the wave front, which leads to a distribution of diverse solution fields; see Appendix~\ref{appx:samples_Wave} for visualizations.  Importantly, for both wave and flow problem, the initial condition is deterministic and fixed, thus deterministic approaches such as neural operators cannot capture the stochasticity induced by the random media and permeability fields during the autoregressive roll out; see Section~\ref{sec:Challenges}.

Problem setup:
\[
  \begin{cases}
    \partial_{tt} u(t,\x) = c^2(\x) \Delta  u(t,\x),     & t\in(0,8], \; \x\in \Omega, \\
    u(0,\x) = \exp\Bigl(-30\,\|\x - [\pi,\pi]\|^2\Bigr), & \x\in \Omega,               \\
    \partial_{t} u(0,\x) = 0,                            & \x\in \Omega.
  \end{cases}
\]
$\Omega = [0,2\pi]\times [0,2\pi], T=8.0$
We look at the 2D wave equation with periodic boundary conditions.
\lbreak
The wave speed $c(\x)$ varies in space; it is generated by choosing random Fourier modes from a log-normal distribution which peaks at some wave number $k=1$ which controls the regularity of the random field.
\lbreak
We have $M = 1024$ trajectories, discretized at $T= 64$ time points and $64 \times 64$ points in space.

\subsection{Two-phase flow in random porous media (Flow)}
\label{appx:flow}

We consider an incompressible and immiscible two-phase flow system of water and oil phases \cite{Aarnes2007}. One phase of the flow moves through the domain as it interacts with a random permeability field which deforms the flow into a high variance distribution of complex shapes; see appendix~\ref{appx:samples_Flow}. The governing equations are
\begin{align*}
  - \nabla \cdot (K \lambda (s) \nabla p)                  & = q\,,                                \\
  \phi\frac{\partial}{\partial t}s + \nabla \cdot (f(s) v) & = \frac{q_{\omega}}{\rho_{\omega}}\,,
\end{align*}
where $s$ is the saturation field, $K$ represents the permeability tensor, $\lambda(s)$ is the total fluid mobility (the sum of water and oil mobilities), and $q$ is the source term. In the second equation, $\phi$ denotes the porosity, $f(s)$ the fractional flow of water, and $v$ is the Darcy velocity.

The randomness enters via the permeability field $K$ which is given by the exponential $K(x) = \operatorname{exp}(\epsilon Z(x) + \bar{Z} + Y(x))$, where $Z$ and $Y$ are independent Gaussian random fields, each generated with a Mat\'ern covariance function (marginal standard deviation $\sigma = 3.0$, correlation length $\lambda = 10$, smoothness parameter $\nu = 1$, noisiness parameter $\epsilon = 0.01$).

The equations are discretized with an explicit upwind finite-volume discretization method. We use $100 \times 100$ cells, end time $0.7$, and time-step size $0.007$. The initial condition is zero saturation with a localized source on the upper left corner of the spatial domain.

\subsection{Video data sets}

We measure inference runtime by using the largest batch size that fits into the memory of our H100 GPUs and then divide the total runtime by that batch to get the per-video generation runtime.

\subsection{Metrics}
\label{appx:metrics}

For our physics-based examples, we regard the realizations of our stochastic process as time-dependent solution fields. That is, the $j$-th component of a sample $\x_t$ corresponds to the evaluation of $u : \mathcal{T} \times [0, L]^2 \to \mathbb{R}$ at the $j$-th grid point in $[0, L]^2$ and at time $t$.

To ease comparisons, we normalize $\mathcal T$ to $[0,1]$ and $L$ to $1$ when computing all metrics.

For the sake of comparison and visualization, we require low-dimensional metrics. We construct them as follows: Take a map $f: \x_t \mapsto f(\x_t) \in \bR$. The push-forward of the empirical distribution of $\x_t$ under $f$ defines a distribution on $\bR$.

Given samples from the ground truth $\x_t \sim \hat \rho_t$ and generated samples $\tilde \x_t \sim \tilde \rho_t$, the mismatch can be described by $W_2(f_\sharp \hat \rho_t, f_\sharp \tilde \rho_t)$, which is easy to compute as these are one-dimensional distributions.

The mismatch in this case is defined as a function of time $t$. It is also possible to consider a map $g$ that takes $\{\x_t \}_t$ an entire trajectory and returns a one-dimensional quantity of interest.

\paragraph{Wasserstein of integrated mass}

We look at  Wasserstein of integrated mass (WIM). In this case, the function $f$ corresponds to an $l_1$ norm, denoted $m$:
\[
  m(\x_t) = \frac{1}{n} \| \x_t \|_1 \approx \int_{[0,L]^2} | u(t, x) |\, \rd x, \quad \text{WIM} = \frac{1}{T} \sum_{t=1}^T W_2(m_\sharp \hat \rho_t, m_\sharp \tilde \rho_t).
\]
\paragraph{Wasserstein crossing time}
Given some subset $B \subset [0, L]^2$ the earliest arrival time $\tau$ for some threshold value $c \in \R$ is given by
\[
  \tau(u) = \min\{ t \in \mathcal{T} : u(t, x) > c \text{ for some } x \in B \}.
\]
For a trajectory $\{ \x_t \}_t$, this corresponds to a threshold value being crossed for a subset of entries of $\x_t$. Denote by $\varrho$ the path measure corresponding to trajectories $\{\x_t\}_t$. The Wasserstein crossing time is defined as
\[
  \text{WCT} = W_2(\tau_\sharp \hat \varrho, \tau_\sharp \tilde \varrho).
\]

\section{Architecture details}
\label{appx:architecture}
\paragraph{UNet backbone}
Our network follows the canonical encoder–decoder UNet with skip–connections,
implemented in \texttt{Flax}. All architecture decisions are standard. We use GroupNorm, no dropout,  stride-2 $3{\times}3$ convolution. At each spatial scale for our residual blocks the channel depths are,
\begin{align*}
  \texttt{medium\_feature\_depths}  =[128,\,256,\,512],\qquad \\
  \texttt{large\_feature\_depths}=[128,\,256,\,512,\,1024].
\end{align*}

The only architectural modification is the conditioning on the label $\bfxi_t$. Many diffusion backbones condition on the ``diffusion time'' via some modulation scheme: The only difference is we skip the initial sinusoidal embedding which maps the diffusion-time scalar to a vector. Instead we directly embed $\bfxi$ with a two-layer MLP with width $512$. The MLP output then modulates the residual feature maps via a standard FiLM modulation scheme \cite{perez2018film}.

As stated in the main text, apart from re-purposing the timestep embedding to encode the conditioning label $\xi$, all components adhere to established UNet practice, enabling a
like-for-like comparison with standard diffusion backbones.
\section{Training details}
\label{appx:training}
For the physics-based problems (wave, flow), we use the L2 loss defined in equation~\ref{eq:MSELoss}. For the video datasets, we replace the L2 metric ($\|x - y\|_2^2$) with a perceptual-based LPIPS loss \cite{zhang2018unreasonable} which we find improves FVD scores. This normalization prevents the network's predictions from diverging during rollout, making it particularly suitable for video data, which naturally lies within $[0,1]$.
\begin{table}[!htbp]
  \small
  \centering
  \caption{Hyper-parameters used for each dataset.}
  \setlength{\tabcolsep}{10pt}
  \renewcommand{\arraystretch}{1.15}
  \begin{tabularx}
    {0.95\textwidth}{||>{\raggedright\arraybackslash\bfseries}l|
    *{5}{>{\centering\arraybackslash}X}||}
    \hline
                        & Wave      & Flow      & BAIR                & CLEVRER              \\
    \hline
    Batch Size          & 64        & 64        & 128                 & 32                   \\
    Label Dim           & 64        & 64        & 512                 & 256                  \\
    Iterations          & 350\,000  & 350\,000  & 500\,000            & 500\,000             \\
    Optimizer           & AdamW     & AdamW     & AdamW               & AdamW                \\
    Learning Rate       & $1e^{-4}$ & $1e^{-4}$ & $1e^{-4}$           & $1e^{-4}$            \\
    Schedule            & cosine    & cosine    & cosine              & cosine               \\
    Network Size        & Medium    & Medium    & Medium              & Medium               \\
    Loss                & L2        & L2        & LPIPS               & LPIPS                \\
    Trajectories ($M$)  & 1024      & 1000      & 43\,000             & 10\,000              \\
    Time Points ($T$)   & 64        & 50        & 16                  & 16                   \\
    State Dim ($n$)     & $64^{2}$  & $96^{2}$  & $64^{2}\!\times\!3$ & $128^{2}\!\times\!3$ \\
    Markov Window ($m$) & 3         & 3         & 1                   & 3                    \\
    \hline
  \end{tabularx}
  \label{tab:hyperparams}
\end{table}
\clearpage
\section{Uncurated Samples}
\label{appx:all_samples}
\subsection{Wave}
\label{appx:samples_Wave}
\begin{figure*}[!htbp]
  \centering
  \includegraphics[width=0.8\columnwidth]{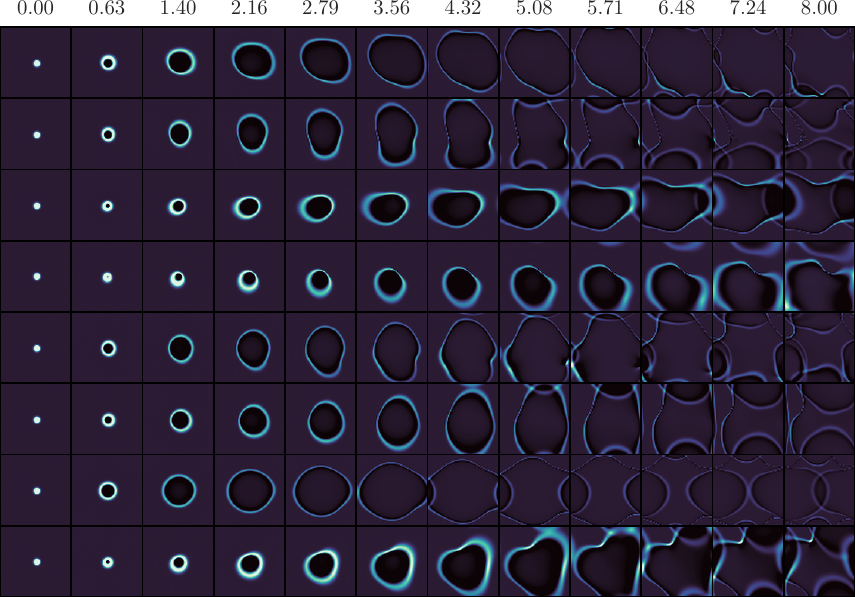}\\
  \vspace*{0.4em}
  \includegraphics[width=0.8\columnwidth]{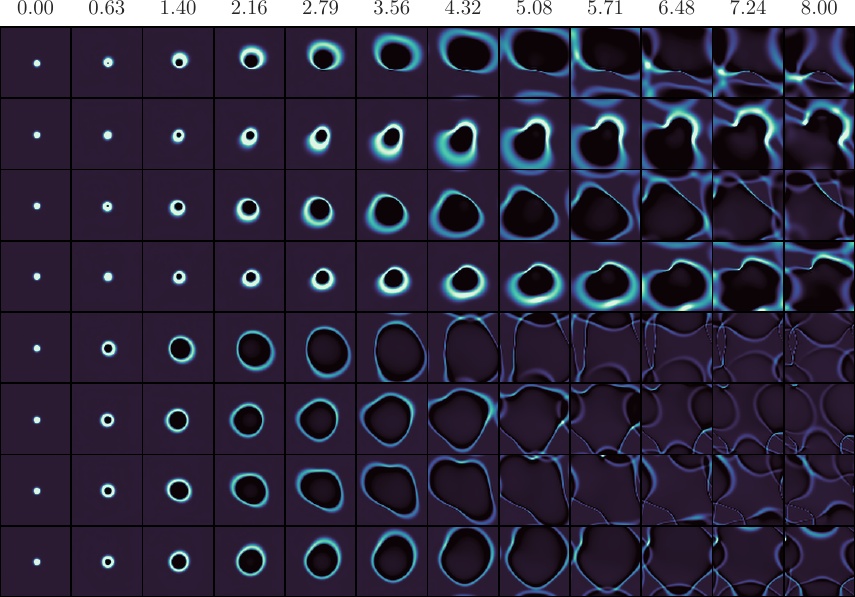}
  \caption{\textbf{Wave samples (uncurated)}. \textbf{Top}: true samples. \textbf{Bottom}: generated from Stochastic Lifting.}
  \label{fig:wave-samples}
\end{figure*}
\clearpage
\subsection{Flow}
\label{appx:samples_Flow}
\begin{figure*}[!htbp]
  \centering
  \includegraphics[width=0.85\columnwidth]{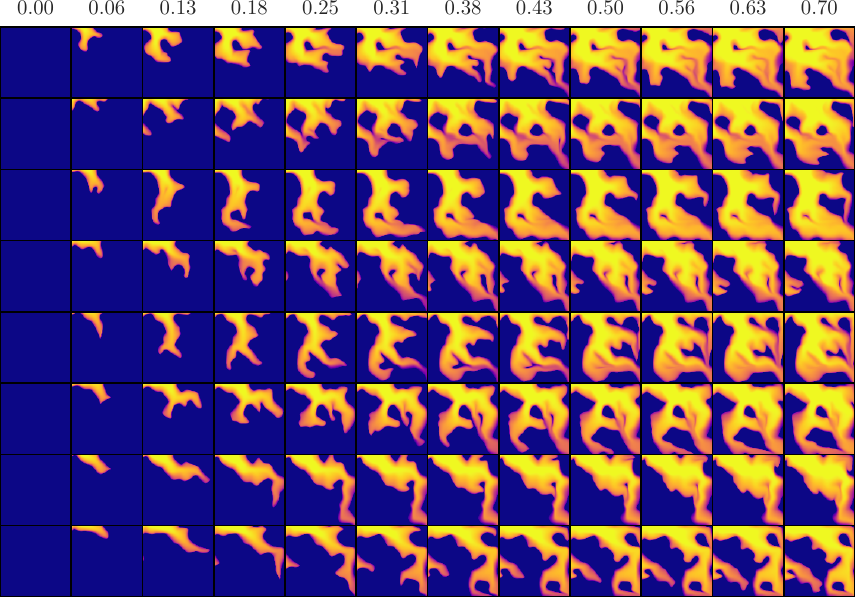}\\
  \vspace*{0.4em}
  \includegraphics[width=0.85\columnwidth]{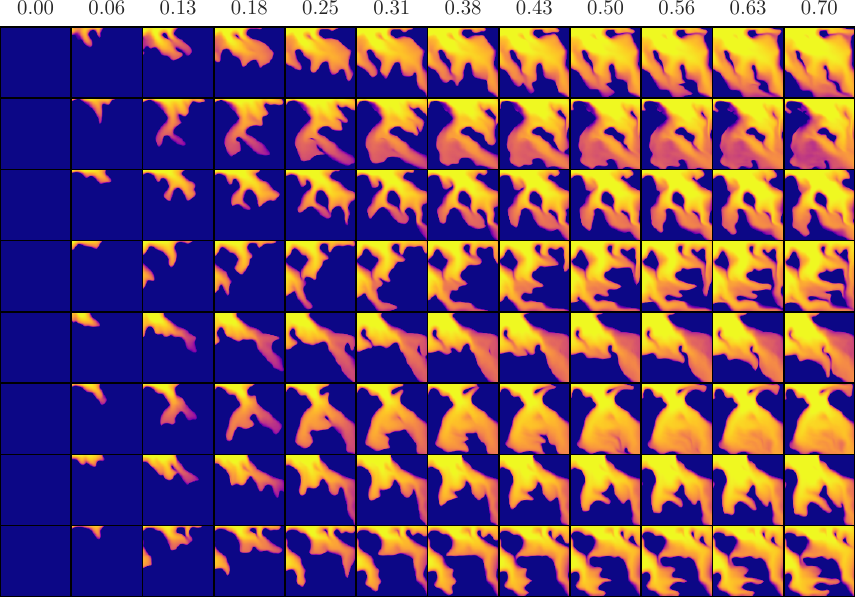}
  \caption{\textbf{Flow samples (uncurated)}. \textbf{Top}: true samples. \textbf{Bottom}: generated from Stochastic Lifting.}
  \label{fig:flow-samples}
\end{figure*}
\clearpage
\subsection{CLEVRER}
\label{appx:samples_CLEVRER}
\begin{figure*}[!htbp]
  \centering
  \includegraphics[width=\textwidth]{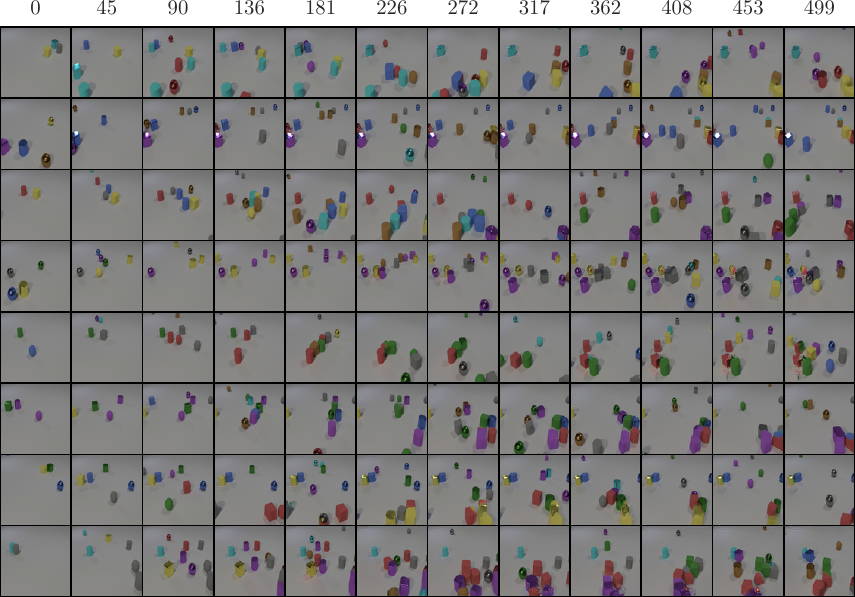}
  \caption{\textbf{CLEVRER samples (uncurated)}. Generated from Stochastic Lifting. Long rollout.}
  \label{fig:clevrer-samples}
\end{figure*}

\end{document}